%%
%% This is file `sample-sigconf.tex',
%% generated with the docstrip utility.
%%
%% The original source files were:
%%
%% samples.dtx  (with options: `sigconf')
%% 
%% IMPORTANT NOTICE:
%% 
%% For the copyright see the source file.
%% 
%% Any modified versions of this file must be renamed
%% with new filenames distinct from sample-sigconf.tex.
%% 
%% For distribution of the original source see the terms
%% for copying and modification in the file samples.dtx.
%% 
%% This generated file may be distributed as long as the
%% original source files, as listed above, are part of the
%% same distribution. (The sources need not necessarily be
%% in the same archive or directory.)
%%
%% The first command in your LaTeX source must be the \documentclass command.
\documentclass[sigconf]{acmart}
\usepackage{enumitem}
\usepackage{multirow}
\usepackage{amsmath}
\newtheorem{theorem}{Theorem}
\newtheorem{lemma}{Lemma}
\newtheorem{definition}{Definition} 
\newtheorem{problem}{Problem}
\usepackage{bm}
\usepackage[mathscr]{euscript}
\usepackage[linesnumbered,ruled,vlined]{algorithm2e}
\usepackage{subfigure}
\usepackage{balance}
\usepackage{float}
%% Specific for WWW'20
%%
%% \BibTeX command to typeset BibTeX logo in the docs
\AtBeginDocument{%
  \providecommand\BibTeX{{%
    \normalfont B\kern-0.5em{\scshape i\kern-0.25em b}\kern-0.8em\TeX}}}

%% Rights management information.  This information is sent to you
%% when you complete the rights form.  These commands have SAMPLE
%% values in them; it is your responsibility as an author to replace
%% the commands and values with those provided to you when you
%% complete the rights form.

\copyrightyear{2020}
\acmYear{2020}
\setcopyright{iw3c2w3}
\acmConference[WWW '20]{Proceedings of The Web Conference 2020}{April 20--24, 2020}{Taipei, Taiwan}
\acmBooktitle{Proceedings of The Web Conference 2020 (WWW '20), April 20--24, 2020, Taipei, Taiwan}
\acmPrice{}
\acmDOI{10.1145/3366423.3380188}
\acmISBN{978-1-4503-7023-3/20/04}
%% These commands are for a PROCEEDINGS abstract or paper.
%% Update ISBN for Proceedings or Companion, can be found on completed rightsreview form
\settopmatter{printacmref=true}
%%
%% Submission ID.
%% Use this when submitting an article to a sponsored event. You'll
%% receive a unique submission ID from the organizers
%% of the event, and this ID should be used as the parameter to this command.
%%\acmSubmissionID{123-A56-BU3}

%%
%% The majority of ACM publications use numbered citations and
%% references.  The command \citestyle{authoryear} switches to the
%% "author year" style.
%%
%% If you are preparing content for an event
%% sponsored by ACM SIGGRAPH, you must use the "author year" style of
%% citations and references.
%% Uncommenting
%% the next command will enable that style.
%%\citestyle{acmauthoryear}

%%
%% end of the preamble, start of the body of the document source.
\begin{document}

%%
%% The "title" command has an optional parameter,
%% allowing the author to define a "short title" to be used in page headers.
\title{Generalizing Tensor Decomposition for N-ary Relational Knowledge Bases}

%%
%% The "author" command and its associated commands are used to define
%% the authors and their affiliations.
%% Of note is the shared affiliation of the first two authors, and the
%% "authornote" and "authornotemark" commands
%% used to denote shared contribution to the research.
\author{Yu Liu}
%\authornote{Both authors contributed equally to this research.}
%\orcid{1234-5678-9012}
%\author{G.K.M. Tobin}
%\authornotemark[1]
%\email{webmaster@marysville-ohio.com}
\affiliation{%
  \institution{BNRist, EE, Tsinghua University}
%  \streetaddress{P.O. Box 1212}
%  	\city{Haidian Qu}
	\state{Beijing}
	\country{China}
%  \postcode{43017-6221}
}
\email{liuyu2419@126.com}

\author{Quanming Yao}
\affiliation{%
  \institution{4Paradigm Inc}
%  \streetaddress{1 Th{\o}rv{\"a}ld Circle}
%  \city{Hekla}
  \country{Hong Kong}}
\email{yaoquanming@4paradigm.com}

\author{Yong Li}
\affiliation{%
  \institution{BNRist, EE, Tsinghua University}
	\state{Beijing}
	\country{China}
}
\email{liyong07@tsinghua.edu.cn}
%
%\author{Aparna Patel}
%\affiliation{%
% \institution{Rajiv Gandhi University}
% \streetaddress{Rono-Hills}
% \city{Doimukh}
% \state{Arunachal Pradesh}
% \country{India}}
%
%\author{Huifen Chan}
%\affiliation{%
%  \institution{Tsinghua University}
%  \streetaddress{30 Shuangqing Rd}
%  \city{Haidian Qu}
%  \state{Beijing Shi}
%  \country{China}}
%
%\author{Charles Palmer}
%\affiliation{%
%  \institution{Palmer Research Laboratories}
%  \streetaddress{8600 Datapoint Drive}
%  \city{San Antonio}
%  \state{Texas}
%  \postcode{78229}}
%\email{cpalmer@prl.com}
%
%\author{John Smith}
%\affiliation{\institution{The Th{\o}rv{\"a}ld Group}}
%\email{jsmith@affiliation.org}
%
%\author{Julius P. Kumquat}
%\affiliation{\institution{The Kumquat Consortium}}
%\email{jpkumquat@consortium.net}

%%
%% By default, the full list of authors will be used in the page
%% headers. Often, this list is too long, and will overlap
%% other information printed in the page headers. This command allows
%% the author to define a more concise list
%% of authors' names for this purpose.
\renewcommand{\shortauthors}{Trovato and Tobin, et al.}
\fancyhead{}
%%
%% The abstract is a short summary of the work to be presented in the
%% article.
\begin{abstract}
		With the rapid development of knowledge bases (KBs), link prediction task, which completes KBs with missing facts, has been broadly studied in especially binary relational KBs (a.k.a knowledge graph) with powerful tensor decomposition related methods. However, the ubiquitous n-ary relational KBs with higher-arity relational facts are paid less attention, in which existing translation based and neural network based approaches have weak expressiveness and high complexity in modeling various relations. Tensor decomposition has not been considered for n-ary relational KBs, while directly extending tensor decomposition related methods of binary relational KBs to the n-ary case does not yield satisfactory results due to exponential model complexity and their strong assumptions on binary relations. To generalize tensor decomposition for n-ary relational KBs, in this work, we propose GETD, a generalized model based on Tucker decomposition and Tensor Ring decomposition. The existing negative sampling technique is also generalized to the n-ary case for GETD. In addition, we theoretically prove that GETD is fully expressive to completely represent any KBs.  Extensive evaluations on two representative n-ary relational KB datasets demonstrate the superior performance of GETD, significantly improving the state-of-the-art methods by over 15\%. Moreover, GETD further obtains the state-of-the-art results on the benchmark binary relational KB datasets.
\end{abstract}

%%
%% The code below is generated by the tool at http://dl.acm.org/ccs.cfm.
%% Please copy and paste the code instead of the example below.
%%
\begin{CCSXML}
<ccs2012>
<concept>
<concept_id>10002951.10002952.10002953.10002955</concept_id>
<concept_desc>Information systems~Relational database model</concept_desc>
<concept_significance>500</concept_significance>
</concept>
<concept>
<concept_id>10010147.10010178.10010187.10010198</concept_id>
<concept_desc>Computing methodologies~Reasoning about belief and knowledge</concept_desc>
<concept_significance>500</concept_significance>
</concept>
<concept>
<concept_id>10010147.10010257.10010293.10010297.10010299</concept_id>
<concept_desc>Computing methodologies~Statistical relational learning</concept_desc>
<concept_significance>300</concept_significance>
</concept>
</ccs2012>
\end{CCSXML}

\ccsdesc[500]{Information systems~Relational database model}
\ccsdesc[500]{Computing methodologies~Reasoning about belief and knowledge}
\ccsdesc[300]{Computing methodologies~Statistical relational learning}

%%
%% Keywords. The author(s) should pick words that accurately describe
%% the work being presented. Separate the keywords with commas.
\keywords{Link Prediction, N-ary Relation, Knowledge Base, Tucker Decomposition, Tensor Ring Decomposition}

%% A "teaser" image appears between the author and affiliation
%% information and the body of the document, and typically spans the
%% page.
%\begin{teaserfigure}
%  \includegraphics[width=\textwidth]{sampleteaser}
%  \caption{Seattle Mariners at Spring Training, 2010.}
%  \Description{Enjoying the baseball game from the third-base
%  seats. Ichiro Suzuki preparing to bat.}
%  \label{fig:teaser}
%\end{teaserfigure}

%%
%% This command processes the author and affiliation and title
%% information and builds the first part of the formatted document.
\maketitle

\section{Introduction} \label{sec:intro}
In the past decade, the emerging of numerous web-scale knowledge bases (KBs) such as Freebase \cite{freebase}, Wikidata \cite{wikidata}, YAGO \cite{yago} and Google's Knowledge Graph (KG) \cite{google-KG}, has inspired various applications, e.g., question answering \cite{lukovnikov2017neural}, recommender systems \cite{zhang2016collaborative} and natural language processing (NLP) \cite{logan2019barack}. Most of these KBs are constructed based on binary relations with triplet facts represented as (\emph{head entity, relation, tail entity}). However, due to enormous missing facts, KBs face a fundamental issue of incompleteness, which drives KB completion researches especially \emph{link prediction}: predicting whether two entities are related based on known facts in KBs \cite{balavzevic2019tucker}. Extensive studies have been proposed to address this problem, including translational distance models \cite{transE,transR,transH,tranSparse}, neural network models \cite{socher2013reasoning,convE,schlichtkrull2018modeling}, and tensor decomposition models \cite{DistMult,ComplEx,SimplE,balavzevic2019tucker,RESCAL}. Among them, the mathematically principled tensor decomposition models achieve the best performance, with strong capability to capture latent interactions between entities and relations in KBs.

Despite the great attention in the binary relational KBs, higher-arity relational KBs are less studied. In fact, n-ary a.k.a. multi-fold relations play an important role in KBs. For instance, \emph{Purchase} is a common ternary (3-ary) relation, involved with a \emph{Person}, a \emph{Product}, and a \emph{Seller}. \emph{Sports\_award} is a 4-ary relation, involved with a \emph{Player}, a \emph{Team}, an \emph{Award} and a \emph{Season}, giving an example of \emph{Michael Jordan from Chicago Bulls was awarded the MVP award in 1991-1992 NBA season.} Also, as observed in \cite{wen2016representation}, more than 1/3 of the entities in Freebase participate in the n-ary relation. Besides, since higher-arity relations with more knowledge are closer to natural language, link prediction in n-ary relational KBs provides an excellent potential for question answering related NLP applications \cite{ernst2018highlife}. 

To handle link prediction in n-ary relational KBs directly, two categories of models based on translational distance and neural network are proposed recently. In terms of translational distance models, m-TransH \cite{wen2016representation} directly extends TransH \cite{transH} for binary relations to the n-ary case. RAE \cite{zhang2018scalable} further integrates m-TransH with multi-layer perceptron (MLP) by considering the relatedness of entities. However, since the distance-based scoring function of TransH enforces constraints on relations, it fails to represent some binary relations in KBs \cite{SimplE}. Accordingly, its extensions of m-TransH and RAE are not able to represent some n-ary relations, which impairs the performance. In terms of neural network models, NaLP \cite{guan2019link} leverages neural networks for n-ary relational fact modeling and entity relatedness evaluation, and obtains the state-of-the-art results. Nevertheless, NaLP owes good performance to an enormous amount of parameters, which contradicts the linear time and space requirement for relational models in KBs \cite{bordes2013irreflexive}. Therefore, existing methods do not provide an efficient solution for link prediction in n-ary relational KBs, and it is still an open problem to be addressed.

Although tensor decomposition models have been proved to be very powerful in binary relational KBs by both the state-of-art results \cite{balavzevic2019tucker} and theoretical guarantees on full expressiveness \cite{SimplE,wang2018multi}, there is no work to our knowledge adopting this type of model for link prediction in n-ary relational KBs. A possible way is to extend current tensor decomposition models from the binary case to the n-ary case, while direct extensions yield serious issues. First, several existing models \cite{SimplE,ComplEx} leverage some tricky operations in scoring functions for great performance, while these operations are constrained on binary relations, which are not able to be applied in n-ary relations. Second, powerful tensor decomposition models \cite{balavzevic2019tucker} introduce exponential model complexity with the increase of arity, which cannot be applied in large-scale KBs.

To solve link prediction problem in n-ary relational KBs as well as tackle above challenges, we generalize tensor decomposition for n-ary relational KBs. Specifically, we first extend TuckER \cite{balavzevic2019tucker}, the state-of-the-art model for binary relations, to n-TuckER for n-ary relations, with Tucker decomposition utilized \cite{tucker1966some}. Note that the higher-order core tensor in n-TuckER grows exponentially with the arity, and excessively complex models usually overfit, which implies poor performance. Thus, motivated by the benefits of tensor ring (TR) decomposition \cite{zhao2016tensor} for neural network compression in computer vision \cite{widecompression,pan2019compressing}, we integrate TR with n-TuckER together. By representing the higher-order core tensor into a sequence of 3rd-order tensors, TR significantly reduces the model complexity with performance enhanced. The overall model is termed as GETD, \underline{Ge}neralized \underline{T}ensor \underline{D}ecomposition for n-ary relational KBs. Since most KBs only provide positive observations, we also generalize the existing negative sampling technique for efficiently training on GETD. Considering the importance for a link prediction model to have enough expressive power to represent various true and false facts in KBs, we further prove that GETD is \emph{fully expressive}.

The main contributions of this paper are summarized as follows:
\begin{itemize}[leftmargin=10px]
	\item We investigate tensor decomposition for link prediction in n-ary relational KBs, and identify the bottleneck of directly extending existing binary relational models to the n-ary case, including the binary relation constrained scoring function and exponential model complexity.
	
	\item We propose GETD, a generalized tensor decomposition model for n-ary relational KBs. GETD integrates TR decomposition with Tucker decomposition, which scales well with both the size and the arity of the KB. We also generalize the negative sampling technique from the binary to the n-ary case. To the best of our knowledge, GETD is the first model that leverages tensor decomposition techniques for n-ary relational link prediction.
	
	\item We prove that GETD is fully expressive for n-ary relational KBs, which is able to cover all types of relations, and can completely separate true facts from false ones. 
	
	\item We conduct extensive experiments on two representative n-ary relational KB datasets. The results demonstrate that GETD outperforms state-of-the-art n-ary relational link prediction models by 15\%. Furthermore, GETD achieves close and even better performance on two standard binary relational KB datasets compared with existing state-of-the-art models.
\end{itemize}

We organize the rest of this paper as follows. Section \ref{sec:related work} gives a systematic review on the related works of link prediction in KBs. Section \ref{sec:background} introduces the background of tensor decomposition and notations. After that, the framework of GETD and theoretical analyses are presented in Section \ref{sec:framwork}. Section \ref{sec:experiments} evaluates the performance on representative KB datasets and provides extensive analyses. In light of our results, this paper is concluded in Section \ref{sec:conclusion}.

\section{Related Work}\label{sec:related work}
We classify the related works into two categories of binary relational KBs and n-ary relational KBs.

\subsection{Link Prediction on Binary Relational KBs}
Basically, link prediction models embed entities and relations into low-dimensional vector spaces, and define a scoring function with embeddings to measure if a given fact is true or false. Based on the scoring function design, the typical works in binary relational KBs can be categorized into three groups: translational distance models \cite{transE,transR,transH,tranSparse}, neural network models \cite{socher2013reasoning,convE,schlichtkrull2018modeling}, and tensor decomposition models \cite{DistMult,ComplEx,SimplE,balavzevic2019tucker,RESCAL}.
	
	Translational distance models measure the entity distance after a translational operation carried out by the relation \cite{wang2017knowledge}, and various translational operations are exploited for distance-based scoring functions \cite{transE,transR,transH,tranSparse}. However, most translational distance models are found to have restrictions on relations \cite{SimplE,wang2017knowledge}, thus can only represent part of relations.
	
	Neural network models \cite{socher2013reasoning,convE,schlichtkrull2018modeling} subtly design the scoring function with various neural network structures, which always require a great many parameters to completely represent all relations \cite{SimplE,wang2017knowledge}, increasing training complexity and impractical for large-scale KBs.

With solid theory and great performance, tensor decomposition models are more prevalent methods. In this aspect, the link prediction task is framed as a 3rd-order binary tensor completion problem, where each element corresponds to a triple, one for true facts while zero for false/missing facts respectively. Thus, various tensor decomposition models are proposed to approximate the 3rd-order tensor. For example, DistMult \cite{DistMult} uses Canonical Polyadic (CP) decomposition \cite{hitchcock1927expression} with the equivalence of head and tail embeddings for the same entity, however, fails to capture the asymmetric relation. Furthermore, SimplE \cite{SimplE} takes advantage of the inverse of relations to address the asymmetric relation, while ComplEx \cite{ComplEx} leverages complex-valued embeddings for solution. Recently, Tucker decomposition \cite{tucker1966some} is adopted in TuckER \cite{balavzevic2019tucker} for link prediction, and achieves the state-of-the-art performance. Compared with former works only using entity and relation embeddings to capture the knowledge in KBs, TuckER additionally introduces the core tensor to model interactions between entities and relations, which further improves the expressiveness. According to the discussion, generalizing tensor decomposition is promising for n-ary relational link prediction.

\subsection{Link Prediction on N-ary Relational KBs}
Existing works on n-ary relational link prediction can be categorized into two classes based on the scoring function: translational distance models \cite{wen2016representation,zhang2016collaborative} and neural network models \cite{guan2019link}.
	
The translational distance models of m-TransH \cite{wen2016representation} and RAE \cite{zhang2018scalable} are the first series of works in this field. Based on the distance translation idea, m-TransH is proposed by extending TransH \cite{transH} for the n-ary case, where entities are all projected onto the relation-specific hyperplane, and the scoring function is defined by the weighted sum of projection results. RAE further improves m-TransH with the relatedness assumption that, the likelihood of two entities co-participating in a common n-ary relational fact is important for link prediction. MLP is utilized to model the relatedness and coupled into the scoring function. Since these models are directly extended from the binary case, the restrictions on relations are also inherited with limited representation capability to KBs.  
	
The neural network model, NaLP \cite{guan2019link}, is recently proposed for the state-of-the-art performance in n-ary relational link prediction. In NaLP, entity embeddings of an n-ary relational fact are first passed to the convolution for feature extraction, then the overall relatedness is modeled by FCN, whose output represents the evaluation score. However, a great many parameters involved in NaLP make the training intractable. Moreover, the latent connections between similar relations are not considered, which further leads to inferior empirical performance.
	
As previously discussed, tensor decomposition is a potential solution for n-ary relational link prediction, while directly extending current binary relational tensor decomposition models to the n-ary case is challenging with various bottlenecks. First, most CP-based models achieve great performance mainly due to carefully designed scoring functions with tricky operations. For instance, to model all types of relations, the relation inverse in SimplE \cite{SimplE}  and complex-valued embeddings in ComplEx \cite{ComplEx} are all binary relation constrained operations, which cannot find equivalents when it comes to the n-ary case. Second, some direct extensions introduce tremendous parameters like TuckER \cite{balavzevic2019tucker} to the n-ary case with exponential model complexity, which is impractical and easily affected by noise \cite{SimplE,ComplEx}. Besides, other models like DistMult \cite{DistMult} force the relation to be symmetric, thus are not able to completely represent n-ary relational KBs. A recent work \cite{fatemi2019knowledge} explores DistMult with convolution to the n-ary case, but the interaction of entities and relations is not fully captured. Through our investigation, GETD is the first generalized tensor decomposition model for n-ary relational KBs with both performance and model complexity satisfied.

\section{Background and Notation}\label{sec:background}
\subsection{Tensors and Notations}

\begin{table}[t]
	\caption{Notations}
	\vspace{-10px}
	\label{tab:notations}
	\begin{tabular}{c|l}
		\toprule
		Symbol & Definition\\
		\midrule
		$\bm{\mathscr{X}}$ & $n$th-order tensor $\in\mathbb{R}^{I_1\times\cdots\times I_n}$  \\
		$x_{i_1i_2\cdots i_n}$ & ($i_1,i_2,\cdots,i_n$)-th element of $\bm{\mathscr{X}}$\\
		$\bm{\mathscr{G}}$ & $n$th-order core tensor $\in\mathbb{R}^{J_1\times\cdots\times J_n}$ \\
		$\bm{A}^{(k)}$ & k-mode factor matrix $\in\mathbb{R}^{I_n\times J_n}$\\
		$\bm{a}^{(k)}_j$  & $j$-th column vector of $\bm{A}^{(k)}$ \\
		$\bm{\mathscr{Z}}_k$ & $k$-th TR latent tensor $\in\mathbb{R}^{r_k\times n_k\times r_{k+1}}$\\
		$\bm{Z}_k(i_k)$ &  $i_k$-th lateral slice matrix of $\bm{\mathscr{Z}}_k,\in\mathbb{R}^{r_k\times r_{k+1}}$  \\
		$\bm{r}=[r_1,r_2,\cdots,r_n]$  & TR-ranks \\
		$n_e, \, n_r$  	& the number of entities/relations in the KB\\
		$d_e, \, d_r$  			& entity/relation embedding dimensionality \\
		$n_i$      & 2nd-mode dimensionality of $\bm{\mathscr{Z}}_i$\\		
		$\circ$ &    vector outer product\\
		$\times_n$ &  tensor $n$-mode product\\
		$\langle \cdot \rangle$ & multi-linear dot product\\
		$trace\{\cdot\}$ &  matrix trace operator\\
		\bottomrule
	\end{tabular}
	\Description[]{The table of notations used in this paper.} 
\end{table}

A tensor is a multi-order array, which generalizes the scalar (0th-order tensor), the vector (1st-order tensor) and the matrix (2nd-order tensor) to higher orders. We represent scalars with lowercase letters, vectors with boldface lowercase letters, matrices with boldface uppercase letters and higher-order tensors with boldface Euler script letters. For indexing, let $\bm{a}_i$ denote the $i$-th column of a matrix $\bm{A}$, $x_{i_1i_2\cdots i_n}$ denote the ($i_1,i_2,\cdots,i_n$)-th element of a higher-order tensor $\bm{\mathscr{X}}\in\mathbb{R}^{I_1\times\cdots\times I_n}$, where $I_i$ is the dimensionality of the $i$-th mode. Especially, given a 3rd-order tensor $\bm{\mathscr{Z}}\in\mathbb{R}^{I_1\times I_2\times I_3}$, the $i_2$-th lateral slice matrix of $\bm{\mathscr{Z}}$ is denoted by $\bm{Z}(i_2)$ in the size of $I_1\times I_3$, a.k.a., $\bm{Z}_{:i_2:}$ where the colon indicates all elements of a mode.

As for the operation on tensors, $\circ$ represents the vector outer product, and $\times_i$ represents the tensor $i$-mode product. $\langle\cdot\rangle$ represents the multi-linear dot product, written as $\langle \bm{a}^{(1)},\bm{a}^{(2)},\cdots,\bm{a}^{(n)}\rangle=\sum_i \bm{a}^{(1)}_i\bm{a}^{(2)}_i\cdots\bm{a}^{(n)}_i$. $trace\{\cdot\}$ is the matrix trace operator, written as $trace\{\bm{A}\}=\sum_i a_{ii}$. More details about these operations and tensor properties can be referred to \cite{kolda2009tensor}. The related notations frequently used in this paper are listed in Table~\ref{tab:notations}.

\begin{figure*}[t]
	\centering
	\includegraphics[width=.9\linewidth]{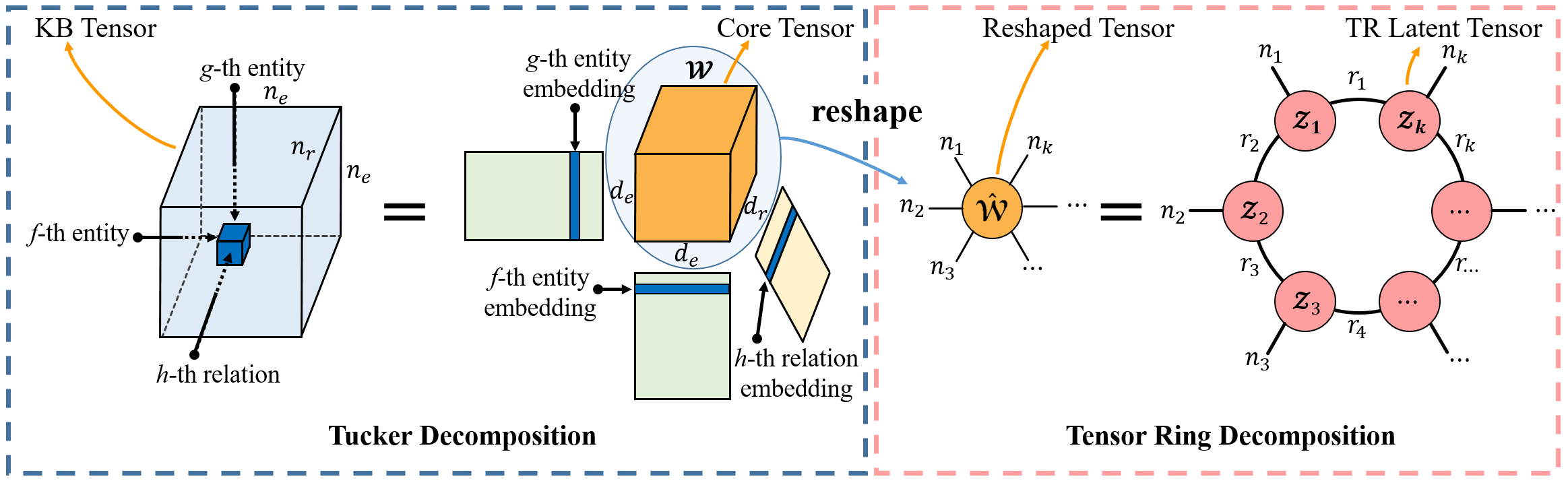}
	\vspace{-10px}
	\caption{The main framework of GETD model.}
	\label{fig:framework}
	\Description[]{The main framework of GETD model, including Tucker decomposition and Tensor ring decomposition.} 
\end{figure*}

\subsection{Tucker Decomposition}
Tucker decomposition was initially proposed for three-order tensor decomposition \cite{tucker1966some}. It can be generalized to higher order, which decomposes a higher-order tensor into a set of factor matrices and a relatively small core tensor. Given an $n$th-order tensor $\bm{\mathcal{X}}\in \mathbb{R}^{I_1\times\cdots\times I_n}$, Tucker decomposition can be denoted as,
\begin{align}
\bm{\mathcal{X}}
&\approx \bm{\mathcal{G}}\times_1 \bm{A}^{(1)}\times_2 \bm{A}^{(2)}\cdots \times_n \bm{A}^{(n)} \notag\\
&=\sum^{J_1}_{j_1=1}\sum^{J_2}_{j_2=1}\cdots\sum^{J_n}_{j_n=1} g_{j_1 j_2 \cdots j_n} \bm{a}^{(1)}_{j_1}\circ \bm{a}^{(2)}_{j_2}\circ\cdots\circ \bm{a}^{(n)}_{j_n},
\end{align}
where $\bm{\mathcal{G}}\in\mathbb{R}^{J_1\times \cdots\times J_n}$ is the \emph{core tensor}, $J_k$ is the rank of $k$-th mode, and $\{\bm{A}^{(k)}\,|\,\bm{A}^{(k)}\in\mathbb{R}^{I_k\times J_k}\}^n_{k=1}$ is the set of \emph{factor matrices}. Usually, $J_1,\cdots,J_n$ are smaller than $I_1,\cdots,I_n$. Thus the number of parameters is reduced compared with the approximated tensor $\bm{\mathscr{X}}$.

\subsection{Tensor Ring (TR) Decomposition}
Although Tucker decomposition approximates a higher-order tensor with fewer parameters, the number of parameters scales exponentially to the tensor order. Tensor ring (TR) decomposition \cite{zhao2016tensor}, on the other hand, represents a higher-order tensor by a sequence of 3rd-order latent tensors multiplied circularly. Given an $n$th-order tensor $\bm{\mathcal{X}}\in \mathbb{R}^{I_1\times\cdots\times I_n}$, TR decomposition can be expressed in an element-wise form as,
\begin{align}
\!\!\!	x_{i_1i_2\cdots i_n} 
\!	&\approx\! trace\{\bm{Z}_1(i_1)\bm{Z}_2(i_2)\cdots \bm{Z}_n(i_n)\}\! =\! trace\{\prod^n_{k=1}\bm{Z}_k(i_k)\},
\end{align}
where $\{\bm{\mathscr{Z}}_k\,|\,\bm{\mathscr{Z}}_k\in \mathbb{R}^{r_k\times I_k\times r_{k+1}}, r_1=r_{n+1}\}^n_{k=1}$ is the set of TR latent tensors, and $\bm{Z}_k(i_k)$ is in the size of $\mathbb{R}^{r_k\times r_{k+1}}$ accordingly. For convenience, we also denote the above TR decomposition as $\bm{TR}(\bm{\mathscr{Z}}_1,\cdots,\bm{\mathscr{Z}}_k)$. Especially, the size of latent factors, concatenated and denoted by $\bm{r}=[r_1,r_2,\cdots,r_n]$ is called \emph{TR-ranks}.

\section{GETD: Design and Model}\label{sec:framwork}

Borrowing the concept of n-ary relation \cite{codd1970relational,ernst2018highlife}, the n-ary relational fact can be defined as follows,
\begin{definition}[n-ary relational fact]
	Given an n-ary relational KB with the set of relations $\mathcal{R}$ and the set of entities $\mathcal{E}$, an n-ary relational fact is an $(n+1)$-tuple $({i_r},{i_1},{i_2},\cdots,{i_n})\subseteq \mathcal{R}\times \mathcal{E}\times \mathcal{E}\times\cdots\times \mathcal{E}$ where $\mathcal{R}$ and $\mathcal{E}$ are called relation domain and entity domain.
\end{definition}

Especially, ${i_k}$ is the $k$-th entity to the relation ${i_r}$, belonging to the $k$-th entity domain. In the binary case of $({i_r},{i_1},{i_2})$, ${i_1}$ and ${i_2}$ are head entity and tail entity, and ${i_r}$ is the relation, respectively. 

Then, the link prediction problem can be defined as follows,
\begin{problem}[link prediction]
	Given an incomplete n-ary relational KB $\mathcal{S}=\{({i_r},{i_1},{i_2}\cdots,{i_n})\}$, the link prediction problem aims to infer missing facts based on $\mathcal{S}$. 
\end{problem}
In practice, given the relation and any $n-1$ entities in an n-ary relational fact, the problem is simplified as predicting the missing entity, e.g. predicting the 1-st entity of the incomplete n-ary relational fact $({i_r},?,{i_2},{i_3},\cdots,{i_n})$.

\subsection{Rethinking Tucker for KBs}
From the point view of tensor completion, an n-ary relational KB can be represented as a binary valued $(n+1)$th-order KB tensor $\bm{\mathscr{X}}\in \{0,1\}^{n_r\times n_e\times n_e\times \cdots \times n_e}$ ($n_r=\vert\mathcal{R}\vert, n_e=\vert\mathcal{E}\vert$), whose 1st-mode is the relation mode, while the other modes are entity modes in the n-ary relational fact. $x_{i_ri_1i_2\cdots i_n}$ equal to one means the specific n-ary relational fact is true, and zero for false/missing. The approximated low-rank scoring tensor is denoted by $\bm{\hat{\mathscr{X}}}\in\mathbb{R}^{n_r\times n_e\times n_e\times \cdots \times n_e}$. Accordingly, the link prediction on $({i_r},?,{i_2},{i_3},\cdots,{i_n})$ can be answered by the entity with maximum value or score in corresponding mode vector of the scoring tensor.

Especially, the state-of-the-art binary relational link prediction model TuckER \cite{balavzevic2019tucker} can be directly extended to the n-ary case termed as n-TuckER, with relation embedding matrix $\bm{R}=\bm{A}^{(1)}\in\mathbb{R}^{n_r\times d_r}$, and entity embedding matrix $\bm{E}$ that is equivalent for each mode entities, i.e., $\bm{E}=\bm{A}^{(2)}=\cdots=\bm{A}^{(n+1)}\in\mathbb{R}^{n_e\times d_e}$, where $d_r$ and $d_e$ represent the dimensionality of relation and entity embedding vectors respectively. The scoring function is defined as,
\begin{align}
\phi({i_r},{i_1},{i_2},\cdots,{i_n})
&=\hat{x}_{i_r i_1 i_2\cdots i_n} \notag\\
&=\bm{\mathscr{W}}\times_1 \bm{r}_{i_r} \times_2 \bm{e}_{i_1}\times_3 \bm{e}_{i_2}\cdots \times_{n+1}\bm{e}_{i_n},  \label{eq:tucker_score_func}
\end{align}
where $\bm{\mathscr{W}}\in\mathbb{R}^{d_r\times d_e\times d_e\times\cdots\times d_e}$ is the $(n+1)$th-order core tensor, $\bm{r}_{i_r}$ and $\{\bm{e}_i\}^{i_n}_{i=i_1}$ are the rows of $\bm{R}$ and $\bm{E}$ representing the relation and the entity embedding vectors. {\color{black}Such a straightforward design inevitably leads to a model complexity of
	\begin{align}
	\mathcal{O}(n_ed_e+n_rd_r+d^n_ed_r), \notag
	\end{align}
	which grows exponentially with $d_e$.} Besides the unacceptable complexity in parameters and increased training difficulty, n-TuckER also faces the dilemma that, excessively complex models are easily affected by noise and prone to overfitting, leading to poor testing performance \cite{SimplE,ComplEx}. 

\subsection{The GETD Model}
In this part, the model construction of GETD with the scoring function is first introduced. Since the existing negative sampling technique only limits on binary relational KBs, then we deal with negative samples for the n-ary case.
\subsubsection{The Scoring Function}
Despite the model complexity and overfitting, leveraging the $(n+1)$th-order core tensor to capture the interaction of entities and relations is instructive that the similarity between entities and relations is encoded in core tensor element. Such Tucker interaction way ensures the strong expressive capability of representing various facts in KBs. It can be envisioned that a model with the Tucker interaction way as well as low complexity is promising for link prediction in n-ary relational KBs.

To achieve this, TR decomposition draws our attention that a higher-order tensor can be decomposed by quite a few parameters in 3rd-order latent tensor sequences. This motivates the general construction of GETD. 

First, in the outer layer of GETD, the original KB tensor is decomposed via Tucker decomposition following \eqref{eq:tucker_score_func}, which reserves Tucker interaction way as well as strong expressiveness. Subsequently, in the inner layer, the intermediate core tensor $\bm{\mathscr{W}}$ is flexibly reshaped to a $k$th-order tensor $\bm{\hat{\mathscr{W}}}\in\mathbb{R}^{n_1\times\cdots\times n_k}$ with $\prod_{i=1}^k n_i=d^n_ed_r$ ($k\geq n+1$) satisfied. Then TR decomposes the reshaped tensor $\bm{\hat{\mathscr{W}}}$ into $k$ latent 3rd-order tensors $\{\bm{\mathscr{Z}}_i\,|\,\bm{\mathscr{Z}}_i\in \mathbb{R}^{r_i\times n_i\times r_{i+1}}, r_1=r_{k+1}\}^k_{i=1}$, reducing the number of parameters. The main framework of GETD is shown in Figure~\ref{fig:framework} (in n=2 case). Specifically, the left part of the figure depicts the construction of outer layer with Tucker decomposition, while the right part presents the TR construction procedure of inner layer. The corresponding expression is,
\begin{align}
\hat{w}_{j_1j_2\cdots j_k}= trace\{\bm{Z}_1(j_1)\bm{Z}_2(j_2)\cdots \bm{Z}_k(j_k)\}.
\end{align}

Overall, the scoring function of GETD can be expressed as,
\begin{align}
&\phi({i_r},{i_1},{i_2},\cdots,{i_n})
=\bm{\hat{\mathscr{W}}}\times_1 \bm{r_{i_r}} \times_2 \bm{e}_{i_1}\times_3 \bm{e}_{i_2}\cdots \times_{n+1}\bm{e}_{i_n}\notag \\
=&\bm{TR}(\bm{\mathscr{Z}}_1,\cdots,\bm{\mathscr{Z}}_k)\times_1 \bm{r_{i_r}} \times_2 \bm{e}_{i_1}\times_3 \bm{e}_{i_2}\cdots \times_{n+1}\bm{e}_{i_n} \label{eq:score}.
\end{align}

The model complexity of GETD is 
\begin{align}
\mathcal{O}(n_ed_e+n_rd_r+kn^3_{\max}),\ \ \ \ \text{s.t.} \ \   n_{\max}=\max\limits_{i=1,\cdots,k}n_i, \notag
\end{align}
which is much lower than n-TuckER, and discussed in detail later. Accordingly, with Tucker interaction way as well as low model complexity, GETD not only guarantees strong expressive capability, but also avoids the overfitting problem with many parameters, which improves the testing performance.

\subsubsection{Dealing with Negative Samples}
In KBs, we usually only have positive observations, i.e., which relation exists among different sets of entities. Thus, even with the designed scoring function in \eqref{eq:score}, we cannot train an embedding model due to lack of negative observations. In embedding of binary relational KBs, given a positive triplet $(i_r, i_1, i_2)$, good candidates of negative samples \cite{nickel2015review} are
	\begin{align}
	\!\!\!\!\!\!\! \mathcal{N}_{(i_r, i_1, i_2)}
	&\equiv 
	\mathcal{N}_{(i_r, i_1, i_2)}^{(1)} \cup \mathcal{N}_{(i_r, i_1, i_2)}^{(2)} \notag \\
	&\equiv
	\left\lbrace (i_r, \bar{i_{1}}, i_2) \not\in \mathcal{S}\, | \, \bar{i_{1}} \in \mathcal{E} \right\rbrace 
	\cup
	\left\lbrace (i_r, i_1, \bar{i_{2}})\not\in \mathcal{S} \,|\, \bar{i_{2}} \in \mathcal{E} \right\rbrace.\label{eq:negset} 
	\end{align} 
Then, negative samples can be sampled from \eqref{eq:negset} by either fixed or dynamic distribution \cite{zhang2019nscaching}. More recently, multi-class log-loss \cite{joulin2017fast,lacroix2018canonical} has developed as a replacement for the above sampling scheme, which can offer better learning performance. Specifically, it considers all candidates in \eqref{eq:negset} simultaneously, i.e.,
	\begin{align}
	\vspace{-10px}
	\mathcal{L}_{(i_r, i_1, i_2)} = \mathcal{L}^{(1)}_{(i_r, i_1, i_2)}+\mathcal{L}^{(2)}_{(i_r, i_1, i_2)}
	\label{eq:mclog}, 
	\end{align}
	where
	\vspace{-10px}
	\begin{align*}
	\mathcal{L}^{(j)}_{(i_r,i_1, \cdots, i_n)}
	= -\phi(i_r,i_1, \cdots, i_n)+
	\log
	\left( 
	e^{\phi(i_r,i_1, \cdots, i_n)} 
	\! + \!\!\!\!\!\!\!\!\! \sum_{x\in\mathcal{N}^{(j)}_{(i_r,i_1, \cdots, i_n)}}  \!\!\!\!\!\!\!\!\! e^{\phi(x)}
	\right).
	\end{align*}
Here, we extend the above multiclass log-loss to n-ary relational KBs. For one positive n-ary relational fact $(i_r, i_1, i_2,\cdots,i_n)$, $n$ groups of negative sample candidates are generated from corresponding $n$ entity domains, defined as
\vspace{-5px}
	\begin{align}
	\!\!\!
	\mathcal{N}_{(i_r, i_1, i_2,\cdots,i_n)}
	\equiv
	\bigcup_{m=1}^n 
	%\mathcal{N}^{(m)}_{(i_r, i_1, i_2, \cdots, i_n)}
	%\equiv
	\left\lbrace (i_r, \cdots, \bar{i_{m}}, \cdots) \not\in \mathcal{S}\, | \, \bar{i_{m}} \in \mathcal{E} \right\rbrace. \label{eq:negset-nary}
	\end{align}
Accordingly, with negative samples given in \eqref{eq:negset-nary}, the loss function of GETD is defined as,
	\begin{align}
	\mathcal{L}_{(i_r, i_1, i_2,\cdots,i_n)} 
	= \sum_{j = 1}^n \mathcal{L}^{(j)}_{(i_r, i_1, i_2,\cdots,i_n)}
	\label{eq:mclog-nary}.
	%=& \sum_{m=1}^{n}{-\phi(i_r,i_1,\cdots,i_n)+\log(e^{\phi(i_r,i_1,\cdots,i_n)}+\!\!\!\!\!\!\!\!\! \sum_{x\in\mathcal{N}^{(m)}_{(i_r, i_1, \cdots, i_n)}} \!\!\!\!\!\!\!\!\! e^{\phi(x)})} 
	\end{align}

\begin{algorithm}[t]
	% \SetAlgoLined
	\caption{Training Algorithm for GETD.} \label{alg:training}
	\KwIn{{training set
			$\mathcal{S}=\{({i_r},{i_1},{i_2}\cdots,{i_n})\}$,\\
			\hspace{2.78em} reshaped tensor order $k$, TR-ranks $\bm{r}$,\\
			\hspace{2.78em} entity/relation embedding dimension $d_e/d_r$;}}
	
	\raggedright{initialize embeddings $\bm{E}$, $\bm{R}$ for $e\in\mathcal{E}$ and $rel\in\mathcal{R}$, TR latent tensors $\{\bm{\mathscr{Z}}_i\}_{i=1}^k$}; \\
	
	\For{$t=1,2,\cdots,n_{epoch}$}
	{   sample a mini-batch $\mathcal{S}_{\text{batch}}\subseteq \mathcal{S}$ of size $m_b$;\\
		$\mathcal{L}\leftarrow 0$; \\
		\For{$({j_r},{j_1},{j_2},\cdots,{j_{n}})\in\mathcal{S}_{\textnormal{batch}}$}
		{	\raggedright{construct negative sample set $\mathcal{N}_{(i_r, i_1, i_2,\cdots,i_n)}$;}\\
			
			\raggedright{${\phi({j_r},{j_1},{j_2},\cdots,{j_{n}})}\leftarrow$ compute the score using \eqref{eq:score};}\\
			
			\raggedright{$\mathcal{L}({j_r},{j_1},{j_2},\cdots,{j_{n}})\leftarrow$ compute the loss using \eqref{eq:mclog-nary}}
			
			\raggedright{$\mathcal{L}\leftarrow \mathcal{L}+\mathcal{L}({j_r},{j_1},{j_2},\cdots,{j_{n}})$;\\}
		}	
		\raggedright{update parameters of embeddings and TR latent tensors w.r.t. the gradients using $\nabla \mathcal{L}$;}
	}
	\KwOut{embeddings $\bm{E},\bm{R}$ and TR latent tensors $\{\bm{\mathscr{Z}}_i\}_{i=1}^k$.}
	\Description[]{The algorithm description.} 
\end{algorithm}

\subsection{Training}
GETD is trained in a mini-batch way, where all observed facts and each entity domain therein are considered for training. Algorithm \ref{alg:training} presents the pseudo-code of the training algorithm. 
With the embedding dimensions and TR-ranks as input, the embeddings of entities and relations as well as TR latent tensors, are randomly initialized before training in line 1. During the training, line 3 samples a mini-batch $\mathcal{S}_{\text{batch}}$ of size $m_b$, in which each observation is considered for training in lines 4-10.
Specifically, for each n-ary relational fact in $\mathcal{S}_{\text{batch}}$, the algorithm constructs the negative sample set $\mathcal{N}_{(i_r, i_1, i_2,\cdots,i_n)}$ following \eqref{eq:negset-nary}, as shown in line 6. Then, the score of the observation as well as the negative samples are computed using \eqref{eq:score} in line 7, which are further utilized to compute the multiclass log-loss with \eqref{eq:mclog-nary} in lines 8-9. Finally, the algorithm updates the model parameters according to the loss gradients.

\subsection{Complexity Analysis}\label{sec:theoretical_analysis}
\begin{table*}[tbp]
	\caption[ujhj]{Scoring functions of state-of-the-art n-ary relational link prediction models for a given fact $({i_r},{i_1},{i_2},\cdots,{i_n})$, with their expressiveness, and significant terms of their model complexity. $n_e$ and $n_r$ are the number of entities and relations, while $d_e$ and $d_r$ are the dimensionality of entity and relation embeddings respectively. $k$ and $n_{\max}$ are the number and the maximum size of TR latent tensors. $\min(\cdot)$ is the element-wise minimizing operation, $[\cdot,\cdot]$ and $[\cdot;\cdot]$ denote hstack and vstack operation.}
	\label{tab:complexity}
	\vspace{-10px}
	\begin{tabular}{cccc}
		\toprule
		\textbf{Model} & \textbf{Scoring Function} & \textbf{Fully Expressive} & \textbf{Model Complexity} \\
		\midrule
		RAE \cite{zhang2018scalable}& $\|\sum^n_{j=1}a_j(\bm{e}_{i_j}-\bm{w}_{i_r}^\top\bm{e}_{i_j}\bm{w}_{i_r})+\bm{r}_{i_r} \|_p$ & No & $\mathcal{O}(n_ed_e+n_rd_e)$  \\
		
		NaLP \cite{guan2019link}& $FCN_2(\min(FCN_1(Conv([\bm{W}_r,[\bm{e}_{i_1};\bm{e}_{i_2};\cdots;\bm{e}_{i_n}]]))))$ & No & $\mathcal{O}(n_ed_e+nn_rd_r)$ \\
		
		n-CP (extension of \cite{hitchcock1927expression}) & $\langle\bm{r}_{i_r},\bm{e}^{(1)}_{i_1},\bm{e}^{(2)}_{i_2},\cdots,\bm{e}^{(n)}_{i_n}\rangle$ & Yes & $\mathcal{O}(nn_ed_e+n_rd_e)$\\
		
		n-TuckER (extension of \cite{balavzevic2019tucker}) & $\bm{\mathscr{W}}\times_1\bm{r}_{i_r}\times_2\bm{e}_{i_1}\times_3\bm{e}_{i_2}\cdots\times_{n+1}\bm{e}_{i_n}$ & Yes &$\mathcal{O}(n_ed_e+n_rd_r+d^n_ed_r)$\\
		
		GETD (this paper) & $\bm{TR}(\bm{\mathscr{Z}}_1,\cdots,\bm{\mathscr{Z}}_k)\times_1 \bm{r}_{i_r}\times_2\bm{e}_{i_1}\times_3\bm{e}_{i_2}\cdots\times_{n+1}\bm{e}_{i_n}$ & Yes & $\mathcal{O}(n_ed_e+n_rd_r+kn^3_{\max})$ \\
		\bottomrule
	\end{tabular}
\Description[]{The comparison between different models on scoring function, fully expressiveness, and model complexity.} 
\end{table*}

According to the model description, the entity and relation embeddings of GETD cost $\mathcal{O}(n_ed_e+n_rd_r)$ parameters. Since each TR latent tensor is 3rd-order with $\bm{\mathscr{Z}}_i\in \mathbb{R}^{r_i\times n_i\times r_{i+1}}$ and TR-rank $r_i$ is usually smaller than $n_i$, the $k$ TR latent tensors cost $\mathcal{O}(kn^3_{\max})$ parameters in sum. Thus, the model complexity of GETD is obtained as $\mathcal{O}(n_ed_e+n_rd_r+kn^3_{\max})$, while GETD also retains the efficiency benefits of tensor mode product in linear time complexity.
	
Moreover, due to the constraint $\prod_{i=1}^k n_i=d^n_ed_r$ in GETD, if TR latent tensors are in the same shape, we can obtain the equation of $n_{\max}=(d^n_ed_r)^{1/k}$. When applied in large-scale KBs with over thousands of entities \cite{yago,bordes2014semantic}, GETD with high reshaped tensor order (larger $k$) derives that $kn^3_{\max}\ll n_ed_e$, which reduces to the linear model complexity of $\mathcal{O}(n_ed_e+n_rd_r)$ to KB sizes. 

We compare GETD with state-of-the-art n-ary relational link prediction models, in terms of scoring function, expressiveness and the model complexity in Table~\ref{tab:complexity}. Among the models, n-TuckER and n-CP are the extensions of Tucker \cite{balavzevic2019tucker} and CP \cite{hitchcock1927expression}, respectively. n-TuckER costs exponential model complexity due to the higher-order core tensor, which is unacceptable for large-scale n-ary relational KBs. Moreover, the explosion of the number of parameters makes n-TuckER prone to overfitting, as shown in experimental results of Section~\ref{sec:experiments}. n-CP requires different embeddings for one entity in different entity domains, which brings the complexity of $\mathcal{O}(nn_ed_e+n_rd_e)$. As stated before, GETD has linear model complexity $\mathcal{O}(n_ed_e+n_rd_e)$ to KB sizes in practical. Therefore, GETD achieves the lowest model complexity in tensor decomposition models with the best performance.
	
As introduced in Section~\ref{sec:related work}, RAE \cite{zhang2018scalable} is a translational distance model, and NaLP \cite{guan2019link} is a neural network model.	Although these two models achieve similar model complexity to GETD, they are short of expressive power, thus perform badly, which is proved by link prediction performance in Section~\ref{sec:experiments} later. Thus, GETD performs best on both model complexity and expressiveness.

\begin{table}[bp]
	\caption{Dataset Statistics. Here ``-3'' and ``-4'' denote the 3-ary and 4-ary relational KB datasets, respectively.}
	\label{tab:dataset}
	\vspace{-10px}
	\begin{tabular}{cccccc}
		\toprule
		Dataset & \#Entities & $\!\!$\#Relations$\!\!$ & \#Train & \#Valid & \#Test \\
		\midrule
		WikiPeople-3 & 12,270       & 66           & 20,656 & 2,582  & 2,582 \\
		WikiPeople-4 & 9,528        & 50           & 12,150 & 1,519  & 1,519 \\
		JF17K-3      & 11,541       & 104          & 27,635 & 3,454  & 3,455 \\
		JF17K-4      & 6,536        & 23           & 7,607  & 951   & 951 \\
		Synthetic10-3  & 10          & 2           & 400  & 50   & 50 \\
		Synthetic10-4  & 10          & 2           & 1,200  & 150   & 150 \\
		WN18         &  40,943       & 18 			& 141,442 & 5,000 & 5,000\\
		FB15k 		& 14,951	& 1,345			& 483,142 & 50,000 & 59,071\\
		\bottomrule
	\end{tabular}
\Description[]{Dataset statistics.} 
\end{table}

\subsection{Full Expressiveness}
A link prediction model is fully expressive if for any ground truth over all entities and relations, there exist embeddings that accurately separate the true n-ary relational facts from the false ones, i.e., the link prediction model can recover any given KB tensors by the assignment of entity and relation embeddings \cite{SimplE,balavzevic2019tucker,ComplEx,wang2017knowledge}. 

The full expressiveness guarantees the completeness of link prediction and KB completion. Especially, if a link prediction model is not fully expressive, it means that the model can only represent a part of KBs with prior constraints, which leads to unwarranted inferences \cite{gutierrez2018knowledge}. For instance, DistMult is not fully expressive, and forces relations to be symmetric, i.e., it can represent KBs with only symmetric relations \cite{SimplE}, {\color{black}while KBs with asymmetric and inverse relations cannot be completely represented.} Thus, the upper bound of learning capacity from a not fully expressive model is low. In contrast, fully expressive models enable KB representation with various types of relations, fully representing the knowledge.

Formally, we have the following theorem to establish the full expressiveness of GETD. 
\begin{theorem} \label{theorem:fully expressiveness}
	For any ground truth over entities $\mathcal{E}$ and relations $\mathcal{R}$, there exists a GETD model that represents that ground truth (See proof in Appendix~\ref{sec:appendix-a}).
\end{theorem}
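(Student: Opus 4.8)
The plan is to reduce the full expressiveness of GETD to two ingredients: first, that the outer Tucker layer, with a suitable choice of embeddings, can reproduce an arbitrary core tensor entry-for-entry as a KB tensor entry; and second, that the inner TR layer can represent an arbitrary core tensor exactly once the TR-ranks are taken large enough. Composing the two yields, for any prescribed ground truth, a GETD model that matches it everywhere.

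First I would collapse the outer layer by using one-hot embeddings. Set $d_e = n_e$ and $d_r = n_r$, and take $\bm{E} = \bm{I}_{n_e}$ and $\bm{R} = \bm{I}_{n_r}$, so that every $\bm{e}_{i_k}$ and $\bm{r}_{i_r}$ is a standard basis vector. Under \eqref{eq:tucker_score_func} the tensor mode products then simply select a single entry, giving $\phi({i_r},{i_1},\cdots,{i_n}) = \hat{w}_{i_r i_1 \cdots i_n}$. Hence it suffices to produce a core tensor $\bm{\hat{\mathscr{W}}}$ whose entries coincide with the prescribed ground-truth KB tensor $\bm{\mathscr{X}} \in \{0,1\}^{n_r\times n_e\times\cdots\times n_e}$. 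Crucially, the reshaping constraint $\prod_{i=1}^k n_i = d^n_e d_r = n^n_e n_r$ matches exactly the number of entries of $\bm{\mathscr{X}}$, so the reshaping is a genuine bijection of entries and representing the reshaped core is equivalent to representing $\bm{\mathscr{X}}$ itself.

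Second I would invoke the universality of TR decomposition: for a sufficiently large choice of TR-ranks $\bm{r}$, every $k$th-order tensor admits an exact TR decomposition. The cleanest route is to specialize the boundary rank $r_1 = r_{k+1} = 1$, which collapses the trace in \eqref{eq:score} to a tensor-train product $\bm{Z}_1(j_1)\bm{Z}_2(j_2)\cdots\bm{Z}_k(j_k)$; tensor-train, and therefore TR, decompositions are exact with ranks bounded by the ranks of the successive matrix unfoldings, which are finite. Applying this to the reshaped ground-truth tensor yields latent tensors $\{\bm{\mathscr{Z}}_i\}_{i=1}^k$ with $\bm{TR}(\bm{\mathscr{Z}}_1,\cdots,\bm{\mathscr{Z}}_k) = \bm{\hat{\mathscr{W}}}$ equal to the reshaping of $\bm{\mathscr{X}}$. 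Combined with the one-hot embeddings, this produces a GETD model with $\phi = x_{i_r i_1 \cdots i_n}$ for every tuple, i.e., one that recovers the ground truth exactly.

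The main obstacle I expect lies in the second step: rigorously establishing that TR, or the specialized tensor-train format, can exactly represent an arbitrary tensor, and bounding the TR-ranks that are required, so that the claimed model genuinely lives inside the GETD hypothesis class for admissible $\bm{r}$. The Tucker-layer reduction is essentially bookkeeping, but the TR universality together with the compatibility of the reshaping $\prod_{i=1}^k n_i = d^n_e d_r$ with achievable ranks is where the real content resides.
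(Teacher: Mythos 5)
Your proof is correct, and its first half—collapsing the outer Tucker layer with one-hot embeddings $\bm{E}=\bm{I}_{n_e}$, $\bm{R}=\bm{I}_{n_r}$ so that the score reduces to a single entry of the (reshaped) core tensor, which must then equal the KB tensor—is exactly the paper's Lemma~\ref{lemma:tucker} and the opening of its theorem proof. Where you genuinely diverge is the inner step, i.e., showing TR can exactly represent the reshaped core. The paper does this by an explicit two-stage construction: Lemma~\ref{lemma:CP} writes the binary core as a CP decomposition with one zero/one-hot rank-one term per entry (CP-rank $r_{CP}=\prod_{i=1}^k n_i$), and Lemma~\ref{lemma:TR-CP} embeds that CP decomposition into TR via diagonal lateral slices $\bm{Z}_i(j_i)=\mathrm{diag}(\bm{u}^{(i)}(j_i))$, so all TR-ranks equal $r_{CP}$. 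You instead set the boundary rank $r_1=r_{k+1}=1$, which collapses the trace into a tensor-train product, and invoke the classical TT exactness result (TT-SVD): every tensor admits an exact TT representation with ranks bounded by the ranks of its successive unfoldings. Both routes are valid; yours rests on that TT theorem, which you correctly flag as the step needing a citation or a short SVD-based induction, while the paper's construction is explicit apart from citing Zhao et al.\ for the CP-to-TR embedding. Your route buys two things: much smaller ranks ($\min(\prod_{j\le i}n_j,\,\prod_{j>i}n_j)$ per mode rather than $\prod_j n_j$ for every mode), and generality, since it exactly represents arbitrary real-valued core tensors rather than only binary ones. The paper's route buys an elementary, fully constructive argument and preserves the ``CP is a special case of TR'' viewpoint that connects to its n-CP baseline.
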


\section{Experiments and Results}\label{sec:experiments}
\subsection{Experimental Setup}
\subsubsection{Datasets}
We evaluate our model with two real datasets across 3-ary and 4-ary relational KBs, one synthetic dataset as well as two benchmark datasets on binary relations, which are introduced as follows.

\textbf{WikiPeople} \cite{guan2019link}: This is a public n-ary relational dataset extracted from Wikidata concerning entities of type $human$. WikiPeople is quite practical, where data incompleteness, insert and update are universal. Due to the sparsity of higher-arity ($\geq 5$) facts in WikiPeople, we filter out all 3-ary and 4-ary relational facts therein, named as WikiPeople-3 and WikiPeople-4, respectively.

\textbf{JF17K} \cite{zhang2018scalable}: This is a public n-ary relational dataset developed from Freebase, whose facts are in good quality. Similar to WikiPeople, the higher-arity facts in JF17K are also sparse, thus we filter out all 3-ary and 4-ary relational facts therein, named as JF17K-3 and JF17K-4, respectively.

\textbf{Synthetic10}: To assess the relationship between the number of parameters and overfitting, we construct the toy dataset across 3-ary and 4-ary relational facts, named as Synthetic10-3 and Synthetic10-4, whose KB tensors are randomly generated by CP decomposition with tensor rank equal to one \cite{kolda2009tensor}. There are only 10 entities and 2 relations in Synthetic10.

\begin{table*}[!htbp]
	\caption{Link prediction results on WikiPeople dataset.}
	\label{tab:results-Wiki}
	\vspace{-10px}
	\begin{tabular}{c|cccc|cccc}
		\toprule
		\multirow{2}{*}{Model} & \multicolumn{4}{c|}{WikiPeople-3} & \multicolumn{4}{c}{WikiePeople-4} \\ \cline{2-9} 
		& MRR & Hits@10 & Hits@3 & Hits@1 & MRR & Hits@10 & Hits@3 & Hits@1 \\
		\hline
		RAE  & 0.239 & 0.379 & 0.252 & 0.168 & 0.150 & 0.273 & 0.149 & 0.080 \\
		NaLP & 0.301 & 0.445 & 0.327 & 0.226 & 0.342 & 0.540 & 0.400 & 0.237 \\
		n-CP & 0.330 & 0.496 & 0.356 & 0.250 & 0.265 & 0.445 & 0.315 & 0.169 \\
		n-TuckER & 0.365 & 0.548 & 0.400 & 0.274 & 0.362 & 0.570 & 0.432 & 0.246 \\
		\hline \hline
		GETD &\textbf{0.373} & \textbf{0.558} & \textbf{0.401} & \textbf{0.284} & \textbf{0.386} & \textbf{0.596} & \textbf{0.462} & \textbf{0.265}\\
		\bottomrule
	\end{tabular}
\Description[]{Link prediction results on WikiPeople dataset.} 
\end{table*}

\begin{table*}[!htbp]
	\caption{Link prediction results on JF17K dataset.}
	\label{tab:results-JF17K}
	\vspace{-10px}
	\begin{tabular}{c|cccc|cccc}
		\toprule
		\multirow{2}{*}{Model} & \multicolumn{4}{c|}{JF17K-3} & \multicolumn{4}{c}{JF17K-4} \\ \cline{2-9} 
		& MRR & Hits@10 & Hits@3 & Hits@1 & MRR & Hits@10 & Hits@3 & Hits@1 \\
		\hline
		RAE  & 0.505 & 0.644 & 0.532 & 0.430 & 0.707 & 0.835 & 0.751 & 0.636 \\
		NaLP & 0.515 & 0.679 & 0.552 & 0.431 & 0.719 & 0.805 & 0.742 & 0.673 \\
		n-CP & 0.700 & 0.827 & 0.736 & 0.635 & 0.787 & 0.890 & 0.821 & 0.733 \\
		n-TuckER & 0.727 & 0.852 & 0.761 & 0.664 & 0.804 & 0.902 & 0.841 & 0.748 \\
		\hline \hline
		GETD &\textbf{0.732} & \textbf{0.856} & \textbf{0.764} & \textbf{0.669} & \textbf{0.810} & \textbf{0.913} & \textbf{0.844} & \textbf{0.755}\\
		\bottomrule
	\end{tabular}
\Description[]{Link prediction results on JF17K dataset.}
\end{table*}

\textbf{WN18} \cite{bordes2014semantic}: This binary relational dataset is a subset of WordNet, a database with lexical relations between words.

\textbf{FB15k} \cite{bordes2013irreflexive}: This binary relational dataset is a subset of Freebase, a database of real world facts including films, sports, etc.

Besides, facts in first three datasets are randomly split into train/valid/test sets by a proportion of 8:1:1. The train/valid/test sets of WN18 and FB15k provided in \cite{transE} are used for evaluation. The datasets statistics are summarized in Table~\ref{tab:dataset}.

\subsubsection{Metrics}
We evaluate the link prediction performance with two standard metrics: mean reciprocal rank (MRR) and Hits@$k$, $k\in\{1,3,10\}$ \cite{SimplE,ComplEx,balavzevic2019tucker,transE,DistMult,guan2019link}. For each testing n-ary relational fact, one of its entities is removed and replaced by all entities in $\mathcal{E}$, leading to $\vert\mathcal{E}\vert$ tuples, which are scored by the link prediction model. The entities in all entity domains are tested. The ranking of the testing fact is obtained by sorting evaluation scores in descending order. MRR is the mean of the inverse of rankings over all testing facts, while Hits@$k$ measures the proportion of top $k$ rankings. Both metrics are in filtered setting \cite{bordes2014semantic}: the ranking of the testing fact is calculated among facts not appeared in train/valid/test sets. The aim is to achieve high MRR and Hits@$k$.

\subsubsection{Baselines}
We compare GETD with the following n-ary relational link prediction baselines:
\begin{itemize}
	\item \textbf{RAE} \cite{zhang2018scalable} is a translational distance model, extending TransH \cite{transH} to m-TransH with relatedness combined\footnote{\url{https://github.com/lijp12/SIR}}.
	\item \textbf{NaLP} \cite{guan2019link} is a neural network model, which achieves the state-of-the-art n-ary relational link prediction performance\footnote{\url{https://github.com/gsp2014/NaLP}}.
	\item \textbf{n-CP} is an extension of CP decomposition \cite{hitchcock1927expression}, firstly applied in n-ary relational link prediction in this paper.
	\item \textbf{n-TuckER} is an extension of TuckER \cite{balavzevic2019tucker} with Tucker decomposition utilized, also firstly applied in n-ary relational link prediction in this paper.
\end{itemize}
Besides, we compare GETD with state-of-the-art models in binary relational KBs, including TransE \cite{transE}, DistMult \cite{DistMult}, ConvE \cite{convE}, ComplEx \cite{ComplEx}, SimplE \cite{SimplE}, and TuckER \cite{balavzevic2019tucker}.

\subsubsection{Implementation}
The implementation of GETD is available at Github\footnote{\url{https://github.com/liuyuaa/GETD}}.
For experimental fairness, we fix entity and relation embedding sizes of GETD, n-CP and n-TuckER. For 3-ary relational datasets WikiPeople-3 and JF17K-3, we set entity and relation embedding sizes to $d_e=d_r=50$,  reshaped tensor order to $k=4$, TR-ranks and TR latent tensor dimensions to $r_i=n_i=50$, while due to the quite smaller numbers of entities and relations, the settings in 4-ary relational datasets are $d_e=d_r=25$, $k=5$, $r_i=n_i=25$. Besides, batch normalization \cite{ioffe2015batch} and dropout \cite{srivastava2014dropout} are used to control overfitting. All hyperparameters except embedding sizes are tuned with \emph{Optuna} \cite{akiba2019optuna}, a Bayesian hyperparameter optimization framework, and the search space of learning rate is $[0.0001,0.1]$ with learning rate decay chosen from $\{0.9,0.995,1\}$, and dropout ranges from $0.0$ to $0.5$. Each model is evaluated with 50 groups of hyperparameter settings. Above three models are trained with Adam \cite{kingma2014adam} using early stopping based on validation set MRR with no improvement for 10 epochs. As for RAE and NaLP, we use the optimal settings reported in \cite{zhang2018scalable} and \cite{guan2019link}, respectively.

\subsection{N-ary Relational Link Prediction}
Table~\ref{tab:results-Wiki} and~\ref{tab:results-JF17K} present the link prediction results on two datasets across 3-ary and 4-ary relational KBs. From the results, we can observe that our proposed GETD achieves the best performance on all metrics across all datasets. Especially, tensor decomposition models of GETD, n-CP and n-TuckER always outperform the translational distance model RAE and the neural network model NaLP. For example, on JF17K, compared with existing state-of-the-art model NaLP, GETD improves MRR by 0.22 and Hits@1 by 55\% for 3-ary relational facts, while the improvement for 4-ary relational facts is 0.09 and 12\%. For WikiPeople, GETD improves MRR by 0.07 and Hits@1 by 25\% on WikiPeople-3, and improves MRR by 0.04 and Hits@1 by 12\% on WikiPeople-4. These considerable improvements further confirm the strong expressive power of the proposed tensor decomposition models. Moreover, the great performance on WikiPeople indicates that GETD is robust and able to handle practical KB issues like data incompleteness, insert and update. Note that the relatively less improvement on 4-ary relational facts may partly owe to the sparsity of higher-arity facts in datasets.

As for the three tensor decomposition models, n-CP is relatively weak due to the difference of embeddings in different entity domains \cite{ComplEx}, while GETD and n-TuckER capture the interaction between entities and relations with TR latent tensors or core tensors. On the other hand, GETD also outperforms n-TuckER owing to the simplicity with much fewer parameters, while the parameter-cost core tensor in n-TuckER increases the complexity of optimization and further overfits. Without the early stopping trick, the performance of n-TuckER seriously degrades and quickly overfits, which is shown in the following. Besides, we run GETD on the largest dataset WikiPeople-3 with a Titan-XP GPU. An epoch training takes about 28s and total training takes 1h, while inference takes only 5s. Overall, the results show the efficiency and robustness of GETD for link prediction in n-ary relational KBs.

\subsection{Overfitting Phenomenon}
Models like n-TuckER with a large amount of parameters easily overfit to the training data, impairing the testing performance. To verify this, we cast the early stopping trick in three tensor decomposition models, and test if there exists the overfitting phenomenon using WikiPeople-4. Accordingly, the training curves in terms of MRR and loss are plotted in Figure~\ref{fig:overfit-mrr} and \ref{fig:overfit-loss}, respectively. 

\begin{figure}[!htbp]
	\subfigure[MRR v.s. epoch.]{ \label{fig:overfit-mrr} 
		\includegraphics[width=0.23\textwidth]{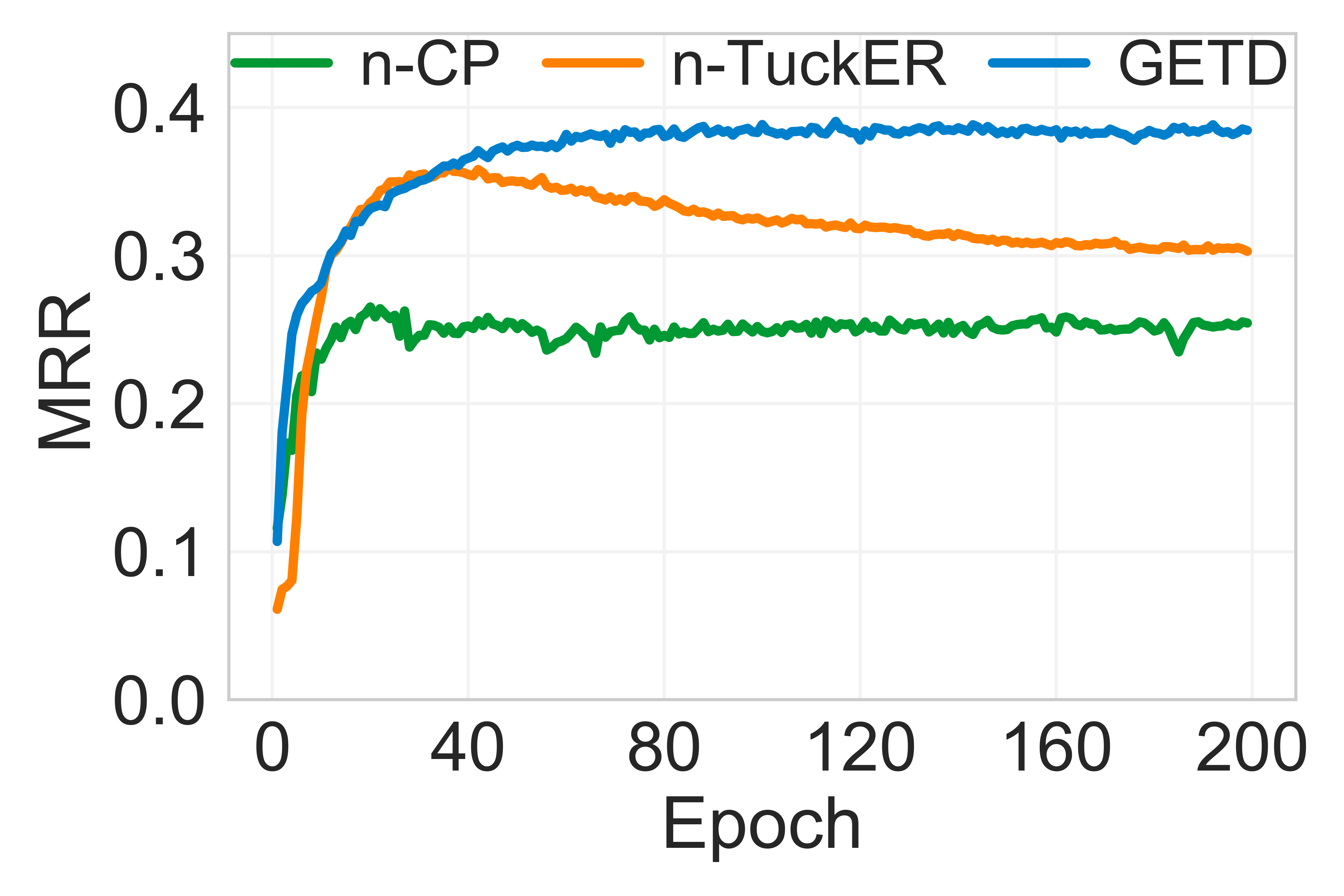}}
	\subfigure[Loss v.s. epoch.]{ \label{fig:overfit-loss} 
		\includegraphics[width=0.23\textwidth]{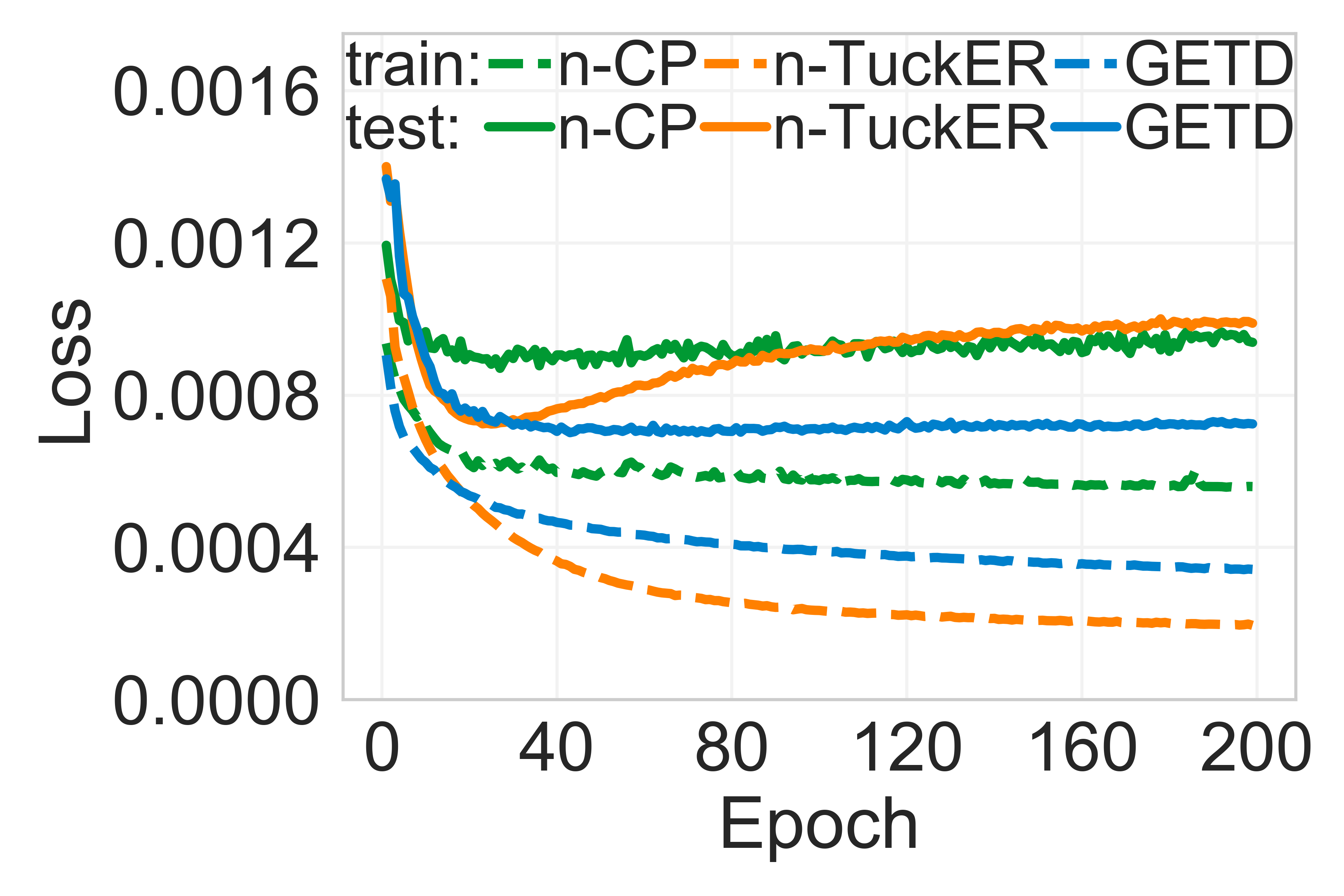}}
	\vspace{-10px}
	{\caption{Overfitting in n-TuckER observered from MRR (left) and loss (right). Evaluated on WikiPeople-4.}
		\label{fig:overfit-WikiPeople}}
	\Description[]{Overfitting phenomenon.}
\end{figure}

From the results, we can clearly observe the overfitting phenomenon in training process of n-TuckER. In Figure~\ref{fig:overfit-mrr}, as training going on, the MRR of n-TuckER increases first and then quickly decreases, while the MRR of the other two models increases to convergence. Moreover, GETD outperforms n-CP due to its strong expressive power. As for loss curves, the train losses of all three models keep decreasing, while the test loss of n-TuckER increases after 20 epochs of training, compared with the convergence in GETD and n-CP test loss curves. It mainly caused by the model complexity that, the numbers of parameters in GETD and n-CP are 0.4 million and 0.9 million, while 10 million in n-TuckER.  

\begin{figure}[!htbp]
	\subfigure[Synthetic10-3.]{ \label{fig:toy-3ar} 
		\includegraphics[width=0.23\textwidth]{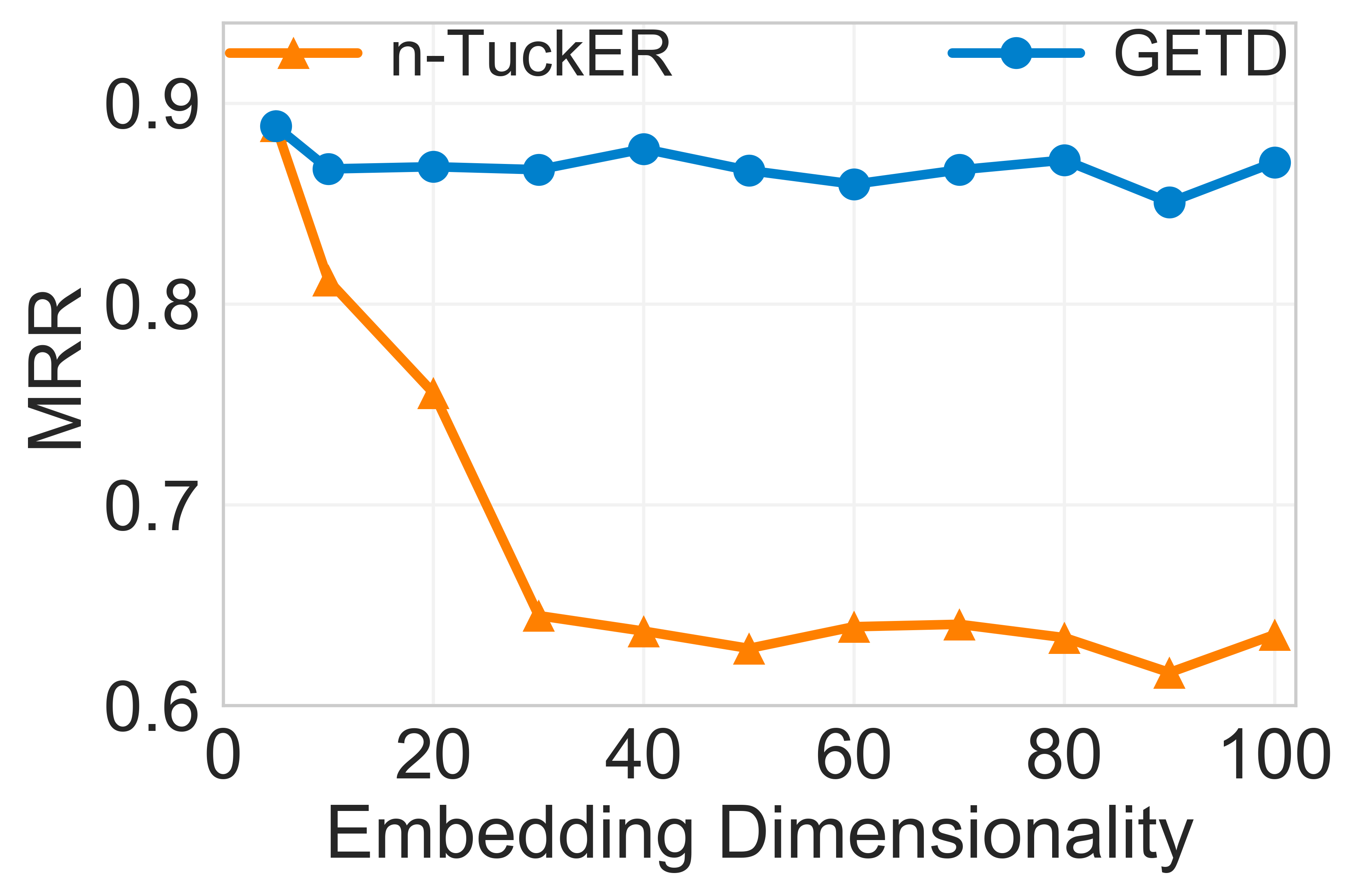}}
	\subfigure[Synthetic10-4.]{ \label{fig:toy-4ar} 
		\includegraphics[width=0.23\textwidth]{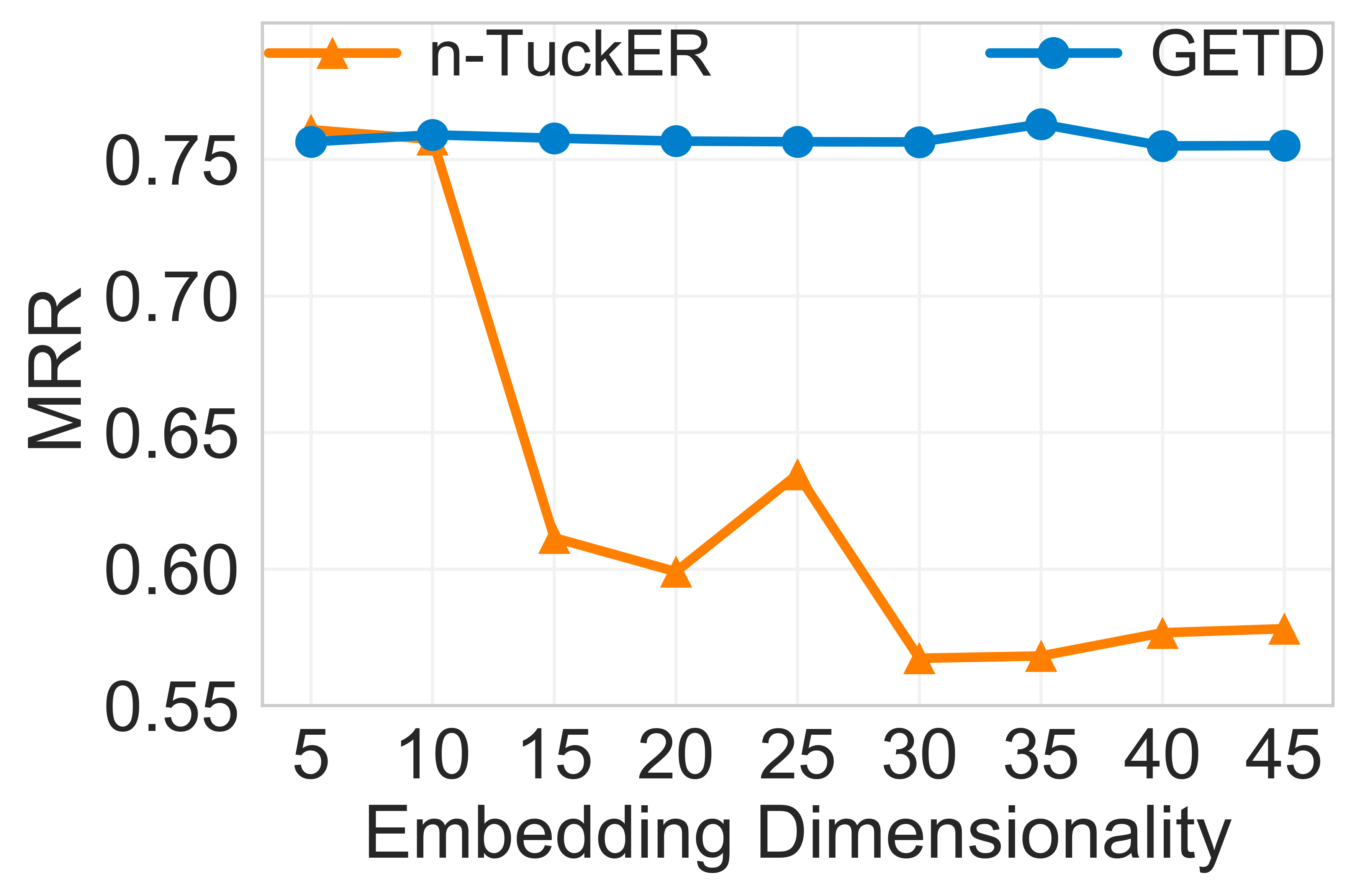}}
	\vspace{-10px}
	{\caption{MRR for n-TuckER and GETD for different embeddings sizes. Evaluated on Synthetic10-3 (left) and Synthetic10-4 (right).}
		\label{fig:overfit-toy}}
	\Description[]{MRR for n-TuckER and GETD for different embeddings sizes. Evaluated on Synthetic10-3 (left) and Synthetic10-4 (right).}
\end{figure}

To further reveal the relationship between the overfitting and the number of parameters, we evaluate the MRR for different embedding sizes on Synthetic10 in Figure~\ref{fig:overfit-toy}. The early stopping is cast, and the MRR after 200 epochs of training are reported. The results of n-TuckER in both 3-ary and 4-ary relational datasets show that, increasing of embedding sizes results in a quality fall in the case of MRR, which means overfitting of n-TuckER. This phenomenon is mainly caused by low-rank property of KB tensors. Taking $d_e=d_r=50$ in Synthetic10-3 as an example, the core tensor in n-TuckER costs $50\times 50 \times 50 \times 50=6.25$ million parameters, while the TR latent tensors in GETD cost only $4\cdot 1\times 50\times 1=200$ parameters with TR-ranks equal to one. Therefore, using n-TuckER with large embedding sizes to approximate the low-rank KB tensors is intractable and prone to overfitting. However, for general KBs, embedding sizes should be large enough for strong expressive power, which is a contradiction. In comparison, GETD is capable of coping with overfitting and expressiveness together based on embedding sizes as well as TR-ranks, which is much more flexible. The flexibility as well as expressiveness thus support the great performance of GETD in Table~\ref{tab:results-Wiki} and ~\ref{tab:results-JF17K}. 

\subsection{Influence of Parameters}
\begin{table*}[tbp]
	\caption{Link prediction results on WN18 and FB15k. Results of TransE and DistMult are copied from \cite{ComplEx}. Other results are copied from the corresponding original papers \cite{convE,ComplEx,SimplE,balavzevic2019tucker}}.
	\label{tab:results-KG}
	\vspace{-10px}
	\begin{tabular}{c|cccc|cccc}
		\toprule
		\multirow{2}{*}{Model} & \multicolumn{4}{c|}{WN18} & \multicolumn{4}{c}{FB15k} \\ \cline{2-9} 
		& MRR & Hits@10 & Hits@3 & Hits@1 & MRR & Hits@10 & Hits@3 & Hits@1 \\
		\hline
		TransE \cite{transE}  & 0.454 & 0.934& 0.823& 0.089& 0.380& 0.641& 0.472& 0.231\\
		DistMult \cite{DistMult}  & 0.822& 0.936& 0.914& 0.728& 0.654 & 0.824& 0.733& 0.546 \\
		ConvE \cite{convE} & 0.943& 0.956& 0.946& 0.935& 0.657 & 0.831& 0.723& 0.558 \\
		ComplEx \cite{ComplEx}   & 0.941& 0.947& 0.945& 0.936& 0.692& 0.840& 0.759& 0.599\\
		SimplE \cite{SimplE} & 0.942& 0.947& 0.944& 0.939& 0.727& 0.838& 0.773& 0.660\\
		TuckER \cite{balavzevic2019tucker} & \textbf{0.953}& \textbf{0.958} & \textbf{0.955}& \textbf{0.949}& 0.795& \textbf{0.892}& 0.833& 0.741\\
		\hline \hline
		GETD & 0.948 & 0.954 & 0.950 & 0.944 & \textbf{0.824} & 0.888 & \textbf{0.847} & \textbf{0.787}\\
		\bottomrule
	\end{tabular}
\Description[]{Link prediction results on WN18 and FB15k datasets.}
\end{table*}

Since the embedding sizes are important factors to link prediction models with expressiveness \cite{balavzevic2019tucker,ComplEx,convE}, while TR-ranks, as well as the reshaped tensor order are unique hyper-parameters of GETD, determining the model complexity and performance, now we investigate the impacts of these parameters. 

\subsubsection{Influence of Embedding Sizes $d_e,d_r$}
The MRR and the number of parameters of three tensor decomposition models under different embedding sizes are evaluated on WikiPeople-4 with TR-ranks equal to embedding sizes. The results are plotted in Figure~\ref{fig:edim-WikiPeople}. 

\begin{figure}[!htbp]
	\subfigure[MRR v.s. $d_e,d_r$.]{ \label{fig:edim-mrr} 
		\includegraphics[width=0.23\textwidth]{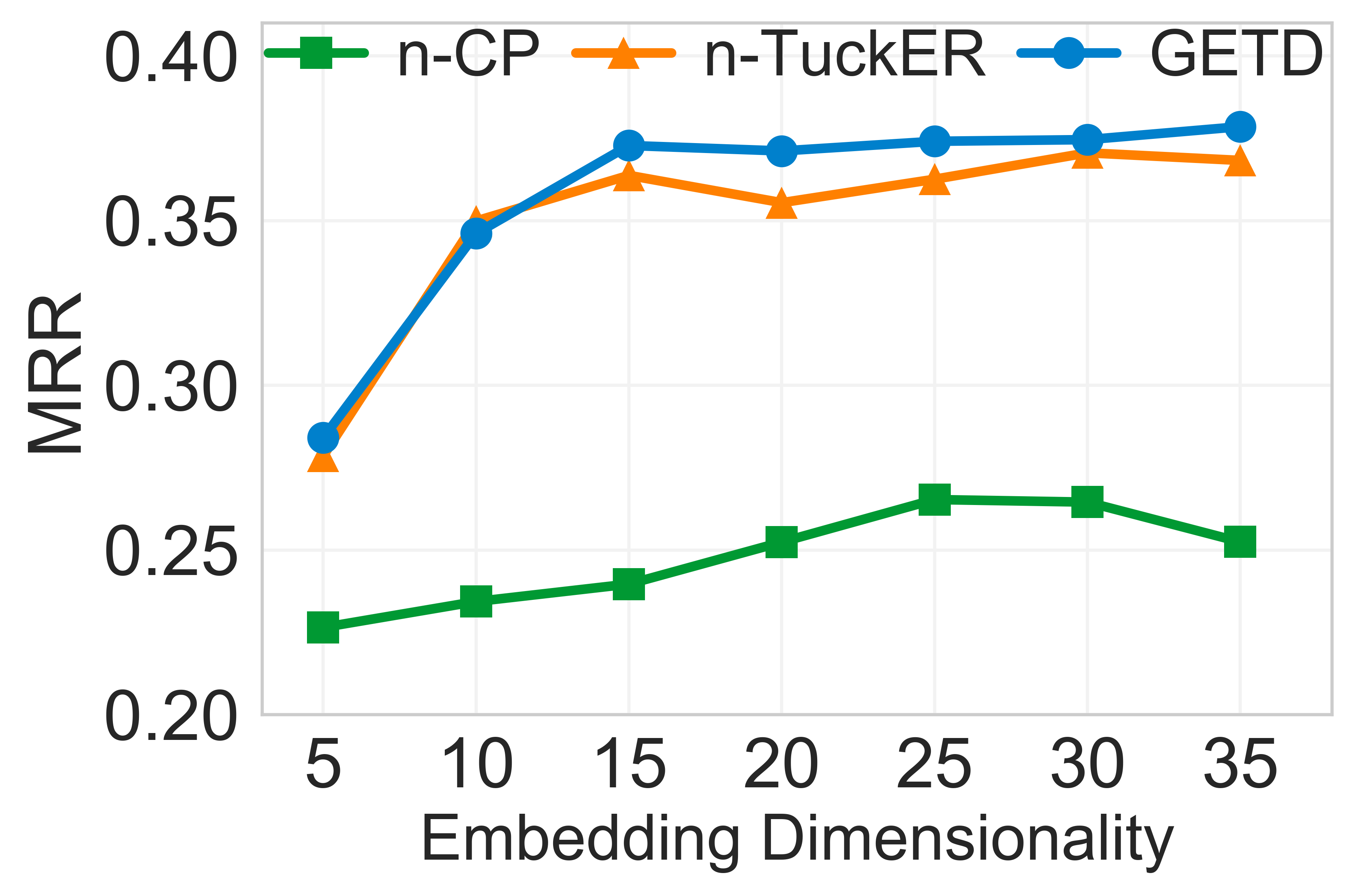}}
	\subfigure[\#Paramters v.s. $d_e,d_r$.]{ \label{fig:edim-param} 
		\includegraphics[width=0.23\textwidth]{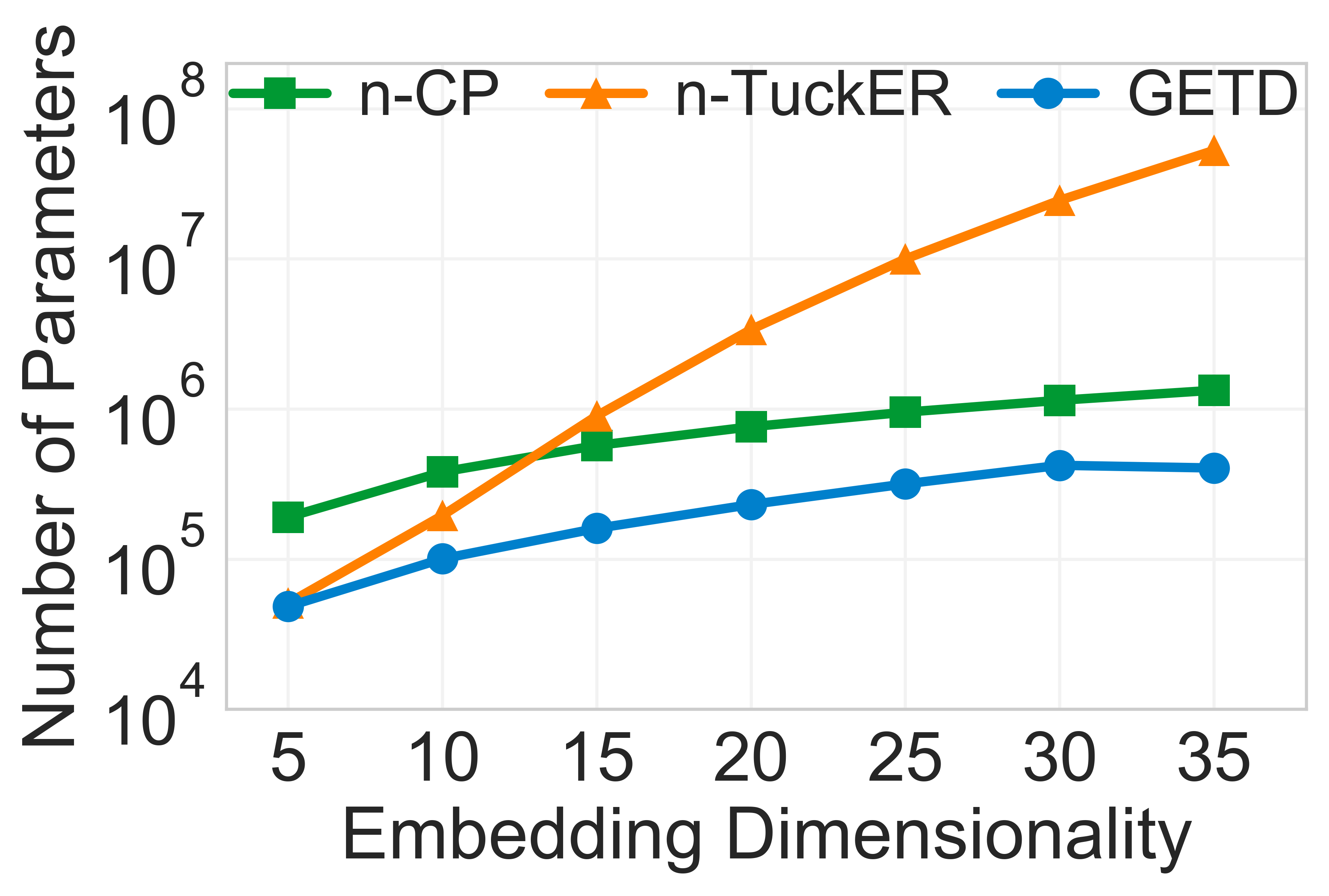}}
	\vspace{-10px}
	{\caption{MRR (left) and number of parameters (right) under different embedding sizes. Evaluated on WikiPeople-4.}
		\label{fig:edim-WikiPeople}}
	\Description[]{MRR (left) and number of parameters (right) under different embedding sizes. Evaluated on WikiPeople-4.}
\end{figure}

According to Figure~\ref{fig:edim-mrr}, GETD always outperforms n-TuckER and n-CP. The MRR of GETD increases globally with the increase of embedding sizes, and gradually becomes smooth. While the MRR of n-TuckER is not stable even with large embedding sizes when early stopping applied. For example, at embedding size 35, GETD increases MRR by $49\%$ for n-CP, and $2.8\%$ for n-TuckER. Moreover, the MRR of GETD reaches $0.372$ at embedding size 15, which is better than the performance of n-TuckER at embedding size over $30$. On the other hand, GETD uses the least parameters in three models, which is shown in Figure~\ref{fig:edim-param}. For embedding size 30, n-TuckER costs 24 million parameters with core tensor using $30^5=24.3$ million, n-CP costs 1.14 million parameters, while GETD only costs 0.42 million parameters with TR latent tensors using $5\cdot 30^3=0.13$ million, $1.7\%$ of n-TuckER parameters. {\color{black}The results are accord with complexity analysis in Section~\ref{sec:theoretical_analysis}, and further indicate that GETD with relatively small embedding sizes is able to obtain good performance, which can be applied for large-scale KBs.}

\subsubsection{Influence of TR-ranks $\bm{r}$}
Since the TR-ranks can largely determine the number of TR latent tensor parameters, and make GETD model more flexible, we reveal the relationship between link prediction performance and TR-ranks on WikiPeople-4 and JF17K-3, as shown in Figure~\ref{fig:mrr-TRrank}. From the results, we can observe that the link prediction performance is affected only when TR-ranks are very small (less than 5), indicating that GETD is not sensitive to TR-ranks. When TR-ranks vary from 20 to 60 on JF17K-3, the MRR is rather stable, and a similar trend can be found on WikiPeople-4. This implies that TR tensors with TR-ranks about 20 are often enough to capture the latent interactions between entities and relations for given datasets. Based on this, the number of parameters for GETD can be further reduced to control model complexity for large-scale KBs.

\begin{figure}[!htbp]
	\subfigure[WikiPeople-4.]{ \label{fig:rank-4ar} 
		\includegraphics[width=0.23\textwidth]{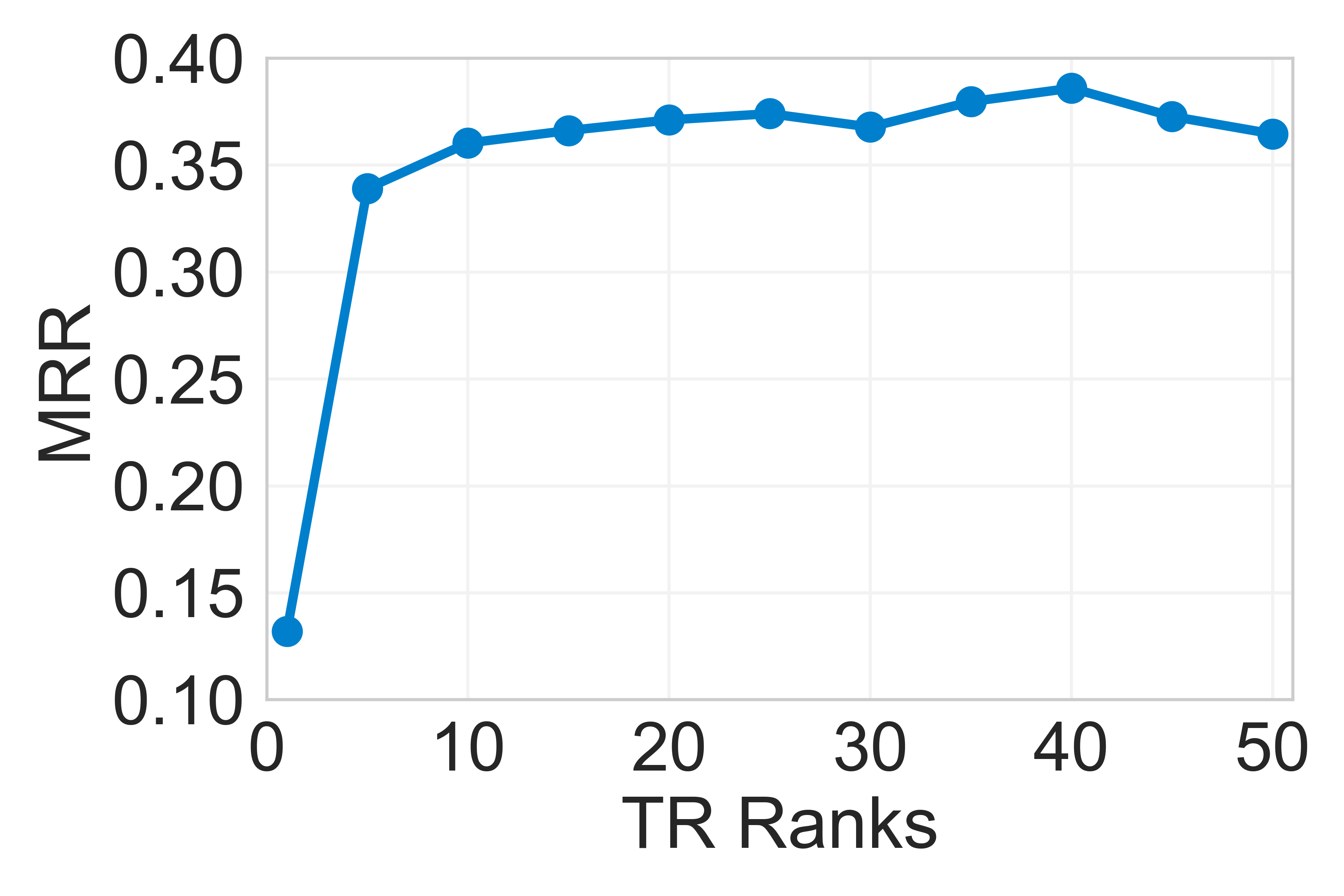}}
	\subfigure[JF17K-3.]{ \label{fig:rank-3ar} 
		\includegraphics[width=0.23\textwidth]{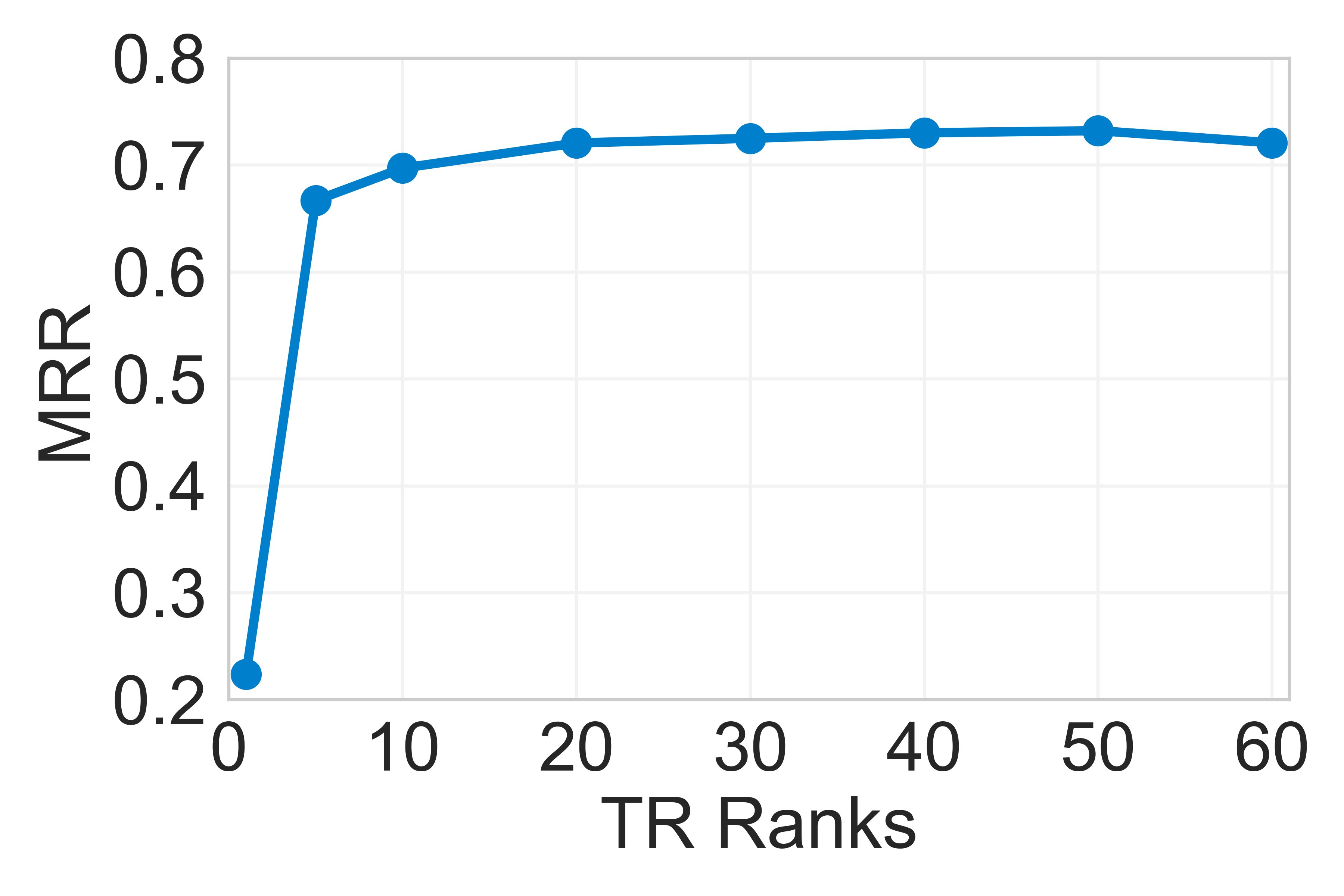}}
	\vspace{-10px}
	{\caption{MRR of GETD on WikiPeople-4 (left) and JF17K-3 (right) under different TR-ranks.}
		\label{fig:mrr-TRrank}}
	\Description[]{MRR of GETD on WikiPeople-4 (left) and JF17K-3 (right) under different TR-ranks.}
\end{figure}
\subsubsection{Influence of Reshaped Tensor Order $k$}
\begin{figure}[htbp]
	\subfigure[WikiPeople-4, $d_e=d_r=25$.]{ \label{fig:order-4ar} 
		\includegraphics[width=0.23\textwidth]{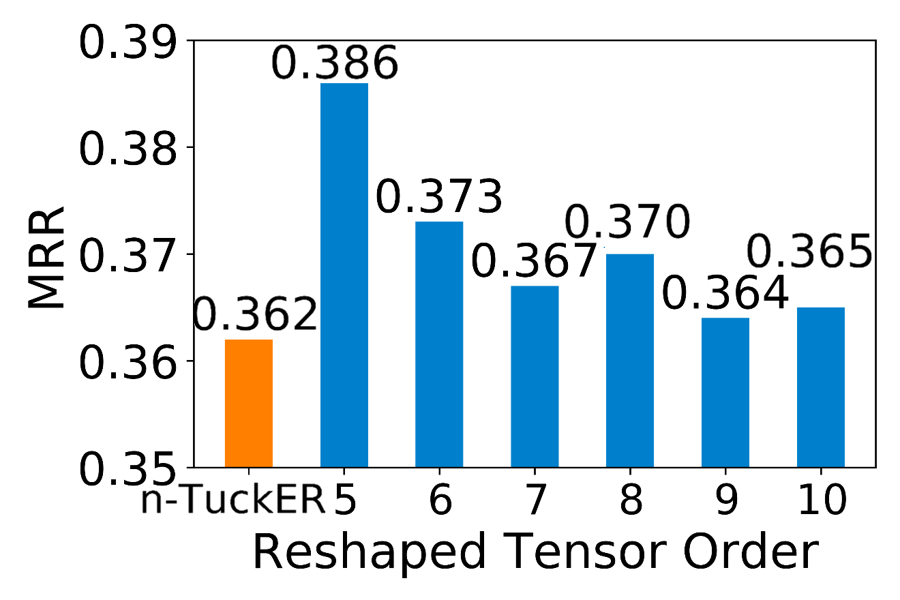}}
	\subfigure[JF17K-3, $d_e=d_r=64$.]{ \label{fig:order-3ar} 
		\includegraphics[width=0.23\textwidth]{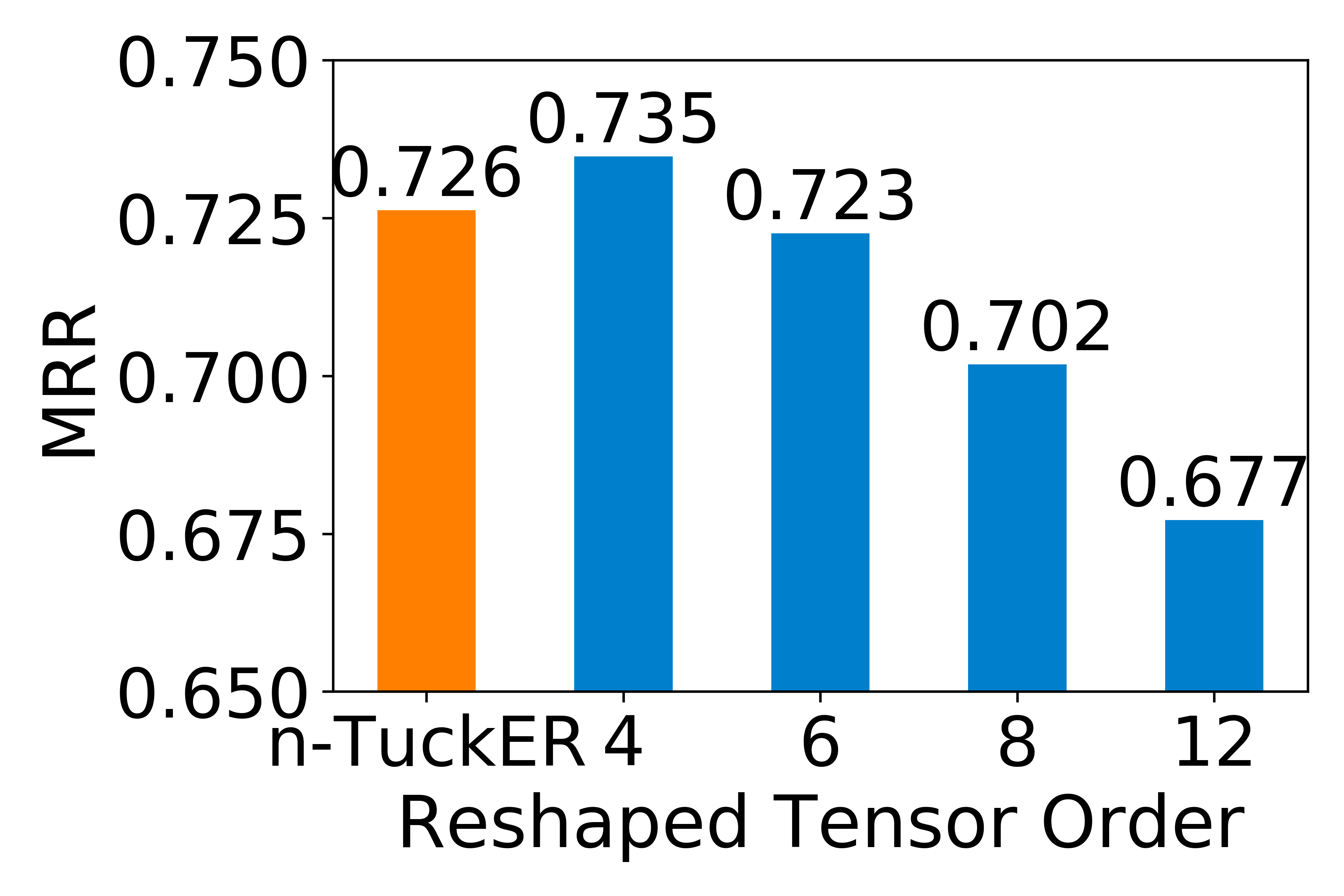}}
	\vspace{-10px}
	{\caption{MRR of GETD on WikiPeople-4 (left) and JF17K-3 (right) under different reshaped tensor orders.}
		\label{fig:mrr-TRorder}}
	\Description[]{MRR of GETD on WikiPeople-4 (left) and JF17K-3 (right) under different reshaped tensor orders.}
\end{figure}
As a key step of connecting Tucker and TR in GETD, the effect of reshaped tensor order is investigated on WikiPeople-4 and JF17K-3, exhibited in Figure~\ref{fig:mrr-TRorder}.  

For WikiPeople-4, the embedding size is set to 25 and thus the original core tensor is $\bm{{\mathscr{W}}}\in\mathbb{R}^{25\times 25 \times 25 \times 25 \times 25}$, leading to the 5th-order reshaped tensor $\bm{\hat{\mathscr{W}}}\in\mathbb{R}^{25\times 25 \times 25 \times 25 \times 25}$, the 6th-order reshaped tensor $\bm{\hat{\mathscr{W}}}\in\mathbb{R}^{5\times 25 \times 25 \times 25 \times 25\times 5}$, etc. It can be observed that, GETD with different orders of reshaped tensors always achieves higher MRR compared with n-TuckER, and the best one increases MRR by 0.024, which is decomposed by 5 TR latent tensors. Overall, the expressive power of GETD decreases with the increase of reshaped tensor order. 

As for JF17K-3, the embedding size is set to 64 so that the reshaped tensor is cubic, i.e., each mode is in the same size \cite{kolda2009tensor}. For example, the 6th-order reshaped tensor becomes $\bm{\hat{\mathscr{W}}}\in\mathbb{R}^{16\times 16 \times 16 \times 16 \times 16 \times 16}$, and the size of each mode for 8th-order tensor becomes 4. Similarly, GETD with the least order of reshaped tensor achieves the highest MRR. Moreover, Figure~\ref{fig:order-3ar} clearly shows the negative correlation between MRR and reshaped tensor order, which is in accord with the above results. This phenomenon is mainly because the higher order involves more TR latent tensors, increasing the optimization complexity. On the other hand, since GETD requires that $\prod_{i=1}^kn_i=d^n_ed_r$, the higher order $k$ also means the smaller $n_{\max}$, which reduces the number of parameters. Thus, the reshaped tensor order in GETD should be appropriately determined considering both link prediction performance and model complexity. 

\subsection{Binary Relational Link Prediction}

To investigate the robustness as well as representation capability of our proposed GETD model, we evaluate the performance of GETD model on WN18 and FB15k. The experimental settings are the same as in n-ary relational link prediction. The embedding sizes of GETD are set to 200, which is similar to the setting in TuckER \cite{tucker1966some}. The reshaped tensor order $k$ is 3, TR-ranks $r_i$ and TR latent tensor dimensions $n_i$ are set to 50 and 200, respectively. 

Table~\ref{tab:results-KG} summarizes the results of GETD and the state-of-the-art models on two datasets. According to the results, GETD achieves the second best performance on WN18, with a quite small MRR gap of 0.005 to TuckER. Moreover, TR latent tensors in GETD costs $3\cdot 50\times 200 \times 50=1.5$ million parameters, only $1/8$ of core tensor parameters in TuckER ($200\times 200 \times 200=8$ million). Thus, GETD is able to obtain better performance with larger embedding sizes but similar number of parameters. As for FB15k, GETD outperforms all state-of-the-art models, and increases MRR by 0.03. Also, GETD increases the toughest metric Hits@1 by 4\% on FB15k. These results demonstrate that, GETD is robust and works well in representing KBs with different arity relations.

\section{Conclusion}\label{sec:conclusion}
This work proposed GETD, a fully expressive tensor decomposition model for link prediction in n-ary relational KBs. Based on the expressiveness of Tucker decomposition as well as the flexibility of tensor ring decomposition, GETD is able to capture the latent interactions between entities and relations with a small number of parameters. Experimental results demonstrate that GETD outperforms the state-of-the-art models for n-ary relational link prediction and achieves close and even better performance on standard binary relational KB datasets.

Considering the benefits of parameter reduction with higher-order reshaped tensor, we plan to extend GETD with appropriate optimization techniques so that GETD with higher-order reshaped tensor can also achieve comparable performance. Besides, GETD only uses observed facts for link prediction, while incorporating GETD with background knowledge such as logical rules and entity properties may bring performance enhancement.
Finally,
we also consider 
using automated machine learning techniques to search
the scoring function from the data \cite{yao2018taking,zhang2019autokge}.

%%
%% The acknowledgments section is defined using the "acks" environment
%% (and NOT an unnumbered section). This ensures the proper
%% identification of the section in the article metadata, and the
%% consistent spelling of the heading.
\begin{acks}
	This work was supported in part by The National Key Research and Development Program of China under grant 2018YFB1800804, the National Nature Science Foundation of China under U1836219, 61971267, 61972223, 61861136003, Beijing Natural Science Foundation under L182038, Beijing National Research Center for Information Science and Technology under 20031887521, and research fund of Tsinghua University - Tencent Joint Laboratory for Internet Innovation Technology. 
\end{acks}

%%
%% If your work has an appendix, this is the place to put it.
\appendix
\section{Proofs} \label{sec:appendix-a}
\subsection{Preliminaries}
Now, we first introduce lemmas that will be used later in the proof.

%% TuckER
\begin{lemma}\label{lemma:tucker}
	For any ground truth over entities $\mathcal{E}$ and relations $\mathcal{R}$, there exists an n-TuckER model with entity embeddings of dimensionality $d_e=\vert \mathcal{E}\vert$ and relation embeddings of dimensionality $d_r=\vert \mathcal{R}\vert$ that represents that ground truth.
\end{lemma}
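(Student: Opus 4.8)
The plan is to prove the lemma by exhibiting an explicit n-TuckER model of the prescribed dimensionalities that reproduces an arbitrary ground-truth KB tensor exactly. First I would choose one-hot embeddings: set the entity embedding matrix $\bm{E}$ to the $n_e \times n_e$ identity, so that the $i$-th entity is assigned the $i$-th standard basis (unit) vector $\bm{u}_i \in \mathbb{R}^{d_e}$ with $d_e = |\mathcal{E}| = n_e$, and likewise set the relation embedding matrix $\bm{R}$ to the $n_r \times n_r$ identity, assigning relation $i_r$ the basis vector $\bm{u}_{i_r} \in \mathbb{R}^{d_r}$ with $d_r = |\mathcal{R}| = n_r$. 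Note that this respects the shared-embedding constraint $\bm{E} = \bm{A}^{(2)} = \cdots = \bm{A}^{(n+1)}$ of n-TuckER, since all entity modes have the same dimension $n_e$ and therefore the same basis vectors live in a common space $\mathbb{R}^{n_e}$.

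Next I would unwind the scoring function \eqref{eq:tucker_score_func} under these embeddings. The key observation is that contracting a tensor along a mode with a one-hot vector $\bm{u}_j$ simply selects the $j$-th slice in that mode. Applying this to all $n+1$ modes in turn — mode $1$ with $\bm{u}_{i_r}$, mode $2$ with $\bm{u}_{i_1}$, and so on through mode $n+1$ with $\bm{u}_{i_n}$ — collapses the whole core tensor to a single scalar, its $(i_r, i_1, \ldots, i_n)$ entry, so that
\begin{align}
\phi(i_r, i_1, \ldots, i_n) = \bm{\mathscr{W}} \times_1 \bm{u}_{i_r} \times_2 \bm{u}_{i_1} \cdots \times_{n+1} \bm{u}_{i_n} = w_{i_r i_1 \cdots i_n}. \notag
\end{align}
Because $d_r = n_r$ and $d_e = n_e$, the core tensor $\bm{\mathscr{W}} \in \mathbb{R}^{d_r \times d_e \times \cdots \times d_e}$ has exactly the same shape as the ground-truth KB tensor $\bm{\mathscr{X}} \in \{0,1\}^{n_r \times n_e \times \cdots \times n_e}$. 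I would therefore set $\bm{\mathscr{W}} = \bm{\mathscr{X}}$, which yields $\phi(i_r, i_1, \ldots, i_n) = x_{i_r i_1 \cdots i_n}$ for every tuple. The score then equals $1$ on true facts and $0$ on false facts, so the model recovers the ground truth and separates true from false with any threshold in $(0,1)$, matching the definition of full expressiveness.

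Since the construction is fully explicit, there is little genuine difficulty here; the only point requiring care is the mode-product bookkeeping — verifying that contracting every one of the $n+1$ modes against the appropriate one-hot vector really does isolate the single entry $w_{i_r i_1 \cdots i_n}$ rather than leaving a residual sum over some unpinned index. This follows directly from the element-wise definition of the $n$-mode product, so I would make it precise by writing that definition out once and observing that each one-hot contraction annihilates all summation indices except the one it pins, leaving exactly the scalar $w_{i_r i_1 \cdots i_n}$.
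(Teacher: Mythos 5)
Your proposal is correct and follows the same construction as the paper's proof: one-hot (identity) entity and relation embeddings with the core tensor set equal to the binary ground-truth KB tensor, so each mode product picks out exactly the entry $w_{i_r i_1 \cdots i_n}$. Your write-up is in fact slightly more careful than the paper's, since it explicitly verifies the mode-product contraction step that the paper leaves implicit.
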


\begin{proof}	
	Let $\bm{e}_{i_1},\bm{e}_{i_2},\cdots,\bm{e}_{i_n}$ be the $n_e$-dimensional one-hot binary vector representation of entities $i_1,i_2,\cdots,i_n$, and $\bm{r}_{i_r}$ the $n_r$-dimensional one-hot binary vector representation of a relation $i_r$. We let the $i_r$-th, $i_1$-th, $i_2$-th, $\cdots$ $i_n$-th element respectively of the corresponding vectors $\bm{r}_{i_r},\bm{e}_{i_1},\bm{e}_{i_2},\cdots,\bm{e}_{i_n}$ be 1 and all other elements 0. Further, we set the $w_{i_ri_1i_2\cdots i_n}$ of the core tensor to 1 if the fact $(i_r,i_1,i_2,\cdots,i_n)$ holds and 0 otherwise. Thus the tensor product of these entity embeddings and the relation embedding with the core tensor, accurately represents the original KB tensor.
	%	\footnote{This proof is an extension of proof for Theorem 1 in \cite{balavzevic2019tucker}.}.
\end{proof}

	%% CP
\begin{lemma}\label{lemma:CP}
	Given any $k$th-order binary tensor $\bm{\mathscr{W}}\in\{0,1\}^{n_1\times n_2\times\cdots\times n_k}$, there exists a CP model with rank $r_{CP}=\prod^k_{i=1}n_i$, that completely decomposes $\bm{\mathscr{W}}$.
\end{lemma}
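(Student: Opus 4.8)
The plan is to give an explicit CP decomposition by using indicator (one-hot) tensors, one for each entry of $\bm{\mathscr{W}}$. Recall that a CP decomposition of rank $r$ writes a tensor as a sum of $r$ rank-one terms, each of the form $\bm{a}^{(1)}_l\circ\cdots\circ\bm{a}^{(k)}_l$. First I would observe that the simplest rank-one tensor is the outer product of standard basis vectors $\bm{e}_{j_1}\circ\cdots\circ\bm{e}_{j_k}$, where $\bm{e}_{j_i}$ is the $j_i$-th canonical basis vector in $\mathbb{R}^{n_i}$; by construction this tensor equals $1$ at position $(j_1,\cdots,j_k)$ and $0$ at every other position.

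Next I would write $\bm{\mathscr{W}}$ as the sum over all multi-indices of its own entries times these indicator tensors,
\[
\bm{\mathscr{W}}=\sum^{n_1}_{j_1=1}\sum^{n_2}_{j_2=1}\cdots\sum^{n_k}_{j_k=1} w_{j_1j_2\cdots j_k}\,\bm{e}_{j_1}\circ\bm{e}_{j_2}\circ\cdots\circ\bm{e}_{j_k}.
\]
Verifying the identity is immediate: evaluating the right-hand side at any fixed position $(i_1,\cdots,i_k)$ makes every summand vanish except the one with $(j_1,\cdots,j_k)=(i_1,\cdots,i_k)$, which returns exactly $w_{i_1i_2\cdots i_k}$. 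Hence the two tensors agree entrywise, so the decomposition is exact.

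Finally I would count the rank-one terms: the number of summands is $\prod^k_{i=1}n_i$, which is the claimed $r_{CP}$. Because $\bm{\mathscr{W}}$ is binary, each coefficient $w_{j_1\cdots j_k}$ lies in $\{0,1\}$, so the terms with coefficient $0$ contribute nothing and may be dropped; the bound $\prod^k_{i=1}n_i$ is therefore an upper bound, attained in the worst case when every entry equals $1$. I do not anticipate a genuine obstacle here, since the construction is explicit and the check is a one-line index computation. The only subtlety worth stating clearly is that the lemma asserts a \emph{sufficient} (worst-case) CP rank rather than the exact rank of a particular $\bm{\mathscr{W}}$, and that the construction in fact works for any real tensor, with the binary hypothesis merely forcing the coefficients into $\{0,1\}$ so that this rank bound can later be converted into a TR-rank bound for the reshaped core tensor $\bm{\hat{\mathscr{W}}}$.
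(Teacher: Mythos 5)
Your proof is correct and follows essentially the same route as the paper's: both decompose $\bm{\mathscr{W}}$ into $\prod^k_{i=1}n_i$ rank-one indicator tensors, one per entry, the only cosmetic difference being that you keep the entry $w_{j_1\cdots j_k}$ as a scalar coefficient on an outer product of canonical basis vectors, whereas the paper absorbs that value into zero/one-hot factor vectors. Nothing further is needed.
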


\begin{proof}
	Since $\bm{\mathscr{W}}\in\{0,1\}^{n_1\times n_2\times\cdots\times n_k}$, we can use $r_{CP}=\prod^k_{i=1}n_i$ zero/one-hot tensors $\{\bm{\mathscr{W}}^{(r)}\,|\,\bm{\mathscr{W}}^{(r)}\in\{0,1\}^{n_1\times n_2\times\cdots\times n_k}\}^{r_{CP}}_{r=1}$ to represent $\bm{\mathscr{W}}$, s.t., $w^{(1)}_{1\cdots1}=w_{1\cdots 1}$, while other elements in $\bm{\mathscr{W}}^{(1)}$ are zeros, $w^{(r)}_{j^{(r)}_1j^{(r)}_2\cdots j^{(r)}_k}=w_{j^{(r)}_1j^{(r)}_2\cdots j^{(r)}_k}$, while other elements in $\bm{\mathscr{W}}^{(r)}$ are zeros, etc. Finally, $w^{(r_{CP})}_{n_1n_2\cdots n_k}=w_{n_1n_2\cdots n_k}$, while other elements in $\bm{\mathscr{W}}^{(r_{CP})}$ are zeros. Moreover, $\bm{\mathscr{W}}^{(r)}$ can be decomposed by rank-one CP decomposition as, 
	\begin{align}
	& \bm{\mathscr{W}}^{(r)}=\bm{u}^{(1)}_r\circ\bm{u}^{(2)}_r\circ\cdots\circ \bm{u}^{(k)}_r,\\
	\text{s.t.}\ \ &   {w}^{(r)}_{j^{r}_1j^{r}_2\cdots j^{r}_k}={u}^{(1)}_r(j^{r}_1)\cdot {u}^{(2)}_r(j^{r}_2)\cdot\cdots \cdot {u}^{(k)}_r(j^{r}_k)
	\end{align}
	where $\bm{u}^{(i)}_{r}$ is the $n_i$-dimensional zero/one-hot vector, and ${u}^{(i)}_{r}(j)$ is the $j$-th element of the vector. Therefore, by assigning ${u}^{(1)}_r(j^r_1)={u}^{(2)}_r(j^r_2)=\cdots={u}^{(k)}_r(j^r_k)={w}^{(r)}_{j^{r}_1j^{r}_2\cdots j^{r}_k}$, and other elements in $\bm{u}^{(i)}_r$ being zeros, the binary tensor $\bm{\mathscr{W}}$ can be completely decomposed by CP decomposition via the set of vectors $\{\bm{u}^{(i)}_r\,|\, i=1,2,\cdots k, r=1,2,\cdots,r_{CP}, r_{CP}=\prod^k_{i=1}n_i\}$ as,
	\begin{align}
	\bm{\mathscr{W}}=\sum^{r_{CP}}_{r=1} \bm{\mathscr{W}}^{(r)}=\sum^{r_{CP}}_{r=1}\bm{u}^{(1)}_r\circ\bm{u}^{(2)}_r\circ\cdots\circ \bm{u}^{(k)}_r. \label{eq:cp}
	\end{align}		
\vspace{-5px}
\end{proof}

\begin{lemma}\label{lemma:TR-CP}
	Cannonical polyadic (CP) decomposition can be viewed as a special case of tensr ring (TR) decomposition. Given a $k$th-order binary tensor $\bm{\mathscr{W}}\in\{0,1\}^{n_1\times n_2\times\cdots\times n_k}$ with its CP decomposition as \eqref{eq:cp}, it can also be written in TR decomposition form as,
	\begin{align}
	&\bm{\mathscr{W}}=\sum^{r_{CP}}_{r=1}\bm{u}^{(1)}_r\circ\bm{u}^{(2)}_r\circ\cdots\circ \bm{u}^{(k)}_r=\bm{TR}(\bm{\mathscr{Z}}_1,\bm{\mathscr{Z}}_2,\cdots,\bm{\mathscr{Z}}_k),\\
	\text{\rm s.t.} \ \ & \bm{Z}_i(j_i)=diag(\bm{u}^{(i)}(j_i)), \ \ \forall i=1,2,\cdots,m
	\end{align}
	where $\bm{\mathscr{Z}}_i\in \{0,1\}^{r_{CP}\times n_i\times r_{CP}}$, $\bm{u}^{(i)}(j_i)=[{u}_1^{(i)}(j_i),{u}_2^{(i)}(j_i),\cdots,{u}_{r_{CP}}^{(i)}(j_i)]$. See proof in \cite{zhao2016tensor}.
\end{lemma}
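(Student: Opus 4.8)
The plan is to verify the claimed identity directly by substituting the proposed diagonal slice matrices $\bm{Z}_i(j_i)=\mathrm{diag}(\bm{u}^{(i)}(j_i))$ into the trace form of the TR decomposition and checking that the resulting expression collapses exactly to the CP sum in \eqref{eq:cp}. Since the statement asserts an equivalence between two explicit element-wise formulas, no existence argument is needed; everything reduces to a single algebraic computation of $\mathrm{trace}\{\prod_{i=1}^{k}\bm{Z}_i(j_i)\}$ for an arbitrary index tuple $(j_1,\dots,j_k)$.

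First I would set each latent tensor to have common ranks $r_1=\cdots=r_{k+1}=r_{CP}$, so that every slice $\bm{Z}_i(j_i)$ is an $r_{CP}\times r_{CP}$ matrix and the TR cyclic constraint $r_1=r_{k+1}$ is satisfied automatically; this makes $\bm{TR}(\bm{\mathscr{Z}}_1,\dots,\bm{\mathscr{Z}}_k)$ well defined. By construction the $(r,r)$-th diagonal entry of $\bm{Z}_i(j_i)$ is $u^{(i)}_r(j_i)$, while all off-diagonal entries vanish.

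The key algebraic fact is that a product of diagonal matrices is again diagonal, with its diagonal obtained entrywise: the $(r,r)$-th entry of $\bm{Z}_1(j_1)\bm{Z}_2(j_2)\cdots\bm{Z}_k(j_k)$ equals $\prod_{i=1}^{k}u^{(i)}_r(j_i)$. Taking the trace then sums these diagonal products over $r$, yielding $\sum_{r=1}^{r_{CP}}\prod_{i=1}^{k}u^{(i)}_r(j_i)$, which is precisely the $(j_1,\dots,j_k)$-th element of the CP decomposition in \eqref{eq:cp}. Since the index tuple was arbitrary, the two tensors agree entrywise, and hence $\bm{\mathscr{W}}=\bm{TR}(\bm{\mathscr{Z}}_1,\dots,\bm{\mathscr{Z}}_k)$, establishing that CP is recovered as the diagonal special case of TR.

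The computation contains no genuine difficulty; the only point requiring care is the bookkeeping of the two index roles---the mode index $j_i$ selecting the slice, and the rank index $r$ running along the shared diagonal---so that the diagonal-multiplication step is stated correctly and the trace indeed reproduces the rank sum. A minor formality worth recording is that the slice entries, and hence each $\bm{\mathscr{Z}}_i$, lie in $\{0,1\}$ exactly when the CP factor vectors are zero/one-valued, matching the binary setting of Lemma~\ref{lemma:CP} from which \eqref{eq:cp} is inherited.
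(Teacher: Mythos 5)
Your proposal is correct and is essentially the argument the paper delegates to the citation \cite{zhao2016tensor}: setting all TR-ranks equal to $r_{CP}$, observing that a product of diagonal slices $\bm{Z}_i(j_i)=\mathrm{diag}(\bm{u}^{(i)}(j_i))$ is diagonal with $(r,r)$-entry $\prod_{i=1}^{k}u^{(i)}_r(j_i)$, and taking the trace to recover $\sum_{r=1}^{r_{CP}}\prod_{i=1}^{k}u^{(i)}_r(j_i)$, i.e.\ the $(j_1,\dots,j_k)$-th element of the CP sum in \eqref{eq:cp}. Your entrywise verification (including silently correcting the statement's typo $m$ for $k$, and noting the slices stay in $\{0,1\}$ under the binary CP factors of Lemma~\ref{lemma:CP}) is exactly the standard diagonal-core construction, so there is nothing to add.
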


\subsection{Theorem~\ref{theorem:fully expressiveness}}
\begin{proof}
	According to Lemma~\ref{lemma:tucker}, n-TuckER is fully expressive by setting the embeddings as well as the core tensor, in which the core tensor is set to an $(n+1)$-th order binary tensor $\bm{\mathscr{W}}\in\{0,1\}^{n_r\times n_e\times n_e\times\cdots\times n_e}$. 
	
	In GETD, $\bm{\mathscr{W}}$ is reshaped into a $k$th-order reshaped tensor $\hat{\bm{\mathscr{W}}}\in\{0,1\}^{n_1\times \cdots \times n_k}$, which is further decomposed by TR decomposition. Keeping the embedding settings as the ones in Lemma~\ref{lemma:tucker},  we only need to prove that TR decomposition is able to recover any given tensor $\hat{\bm{\mathscr{W}}}$. On the other hand, with Lemma~\ref{lemma:CP}, $\hat{\bm{\mathscr{W}}}$ is able to be completely recovered via CP decomposition. Moreover, the CP decomposition can be written as a special case of TR decomposition by Lemma~\ref{lemma:TR-CP}, which derives TR latent tensors $\{\bm{\mathscr{Z}}_i\,|\,\bm{\mathscr{Z}}_i\in \{0,1\}^{r_{CP}\times n_i\times r_{CP}}\}^k_{i=1}$. Overall, following the settings of embeddings in Lemma~\ref{lemma:tucker} and TR latent tensors in Lemma~\ref{lemma:TR-CP}, GETD is proved to be fully expressive with entity embeddings of dimensionality $n_e=\vert\mathcal{E}\vert$ and relation embeddings of dimensionality $d_r=\vert\mathcal{R}\vert$.
\end{proof}

\bibliographystyle{ACM-Reference-Format}
\bibliography{sample-base}

\end{document}